\relax
\documentclass[letterpaper]{article}
\usepackage{aaai19}
\usepackage{times}
\usepackage{helvet}
\usepackage{courier}
\usepackage{stmaryrd}
\usepackage{url}
\urlstyle{rm}
\usepackage{graphicx}
\usepackage{amssymb}
\usepackage{macros}
\usepackage{thm-restate}
\usepackage{tabularx}
\usepackage{multirow}
\usepackage{etoolbox}
\usepackage{thm-restate}
\usepackage{algorithm}
\usepackage[noend]{algpseudocode}
\frenchspacing
\setlength{\pdfpagewidth}{8.5in}
\setlength{\pdfpageheight}{11in}

\makeatletter
\algrenewcommand\ALG@beginalgorithmic{\small}
\makeatother

\newtoggle{withappendix}
\toggletrue{withappendix}

\pdfinfo{
    /Title (Modular Materialisation of Datalog Programs)
    /Author (Pan Hu, Boris Motik, Ian Horrocks)
}

\setcounter{secnumdepth}{2}

\begin{document}

\title{Modular Materialisation of Datalog Programs}
\author{
    Pan Hu \and Boris Motik \and Ian Horrocks \\ 
    Department of Computer Science, University of Oxford \\
    Oxford, United Kingdom \\
    firstname.lastname@cs.ox.ac.uk
}

\maketitle

\begin{abstract}
The semina\"ive algorithm can be used to materialise all consequences of a
datalog program, and it also forms the basis for algorithms that incrementally
update a materialisation as the input facts change. Certain (combinations of)
rules, however, can be handled much more efficiently using custom algorithms.
To integrate such algorithms into a general reasoning approach that can handle
arbitrary rules, we propose a modular framework for computing and maintaining a
materialisation. We split a datalog program into modules that can be handled
using specialised algorithms, and we handle the remaining rules using the
semina\"ive algorithm. We also present two algorithms for computing the
transitive and the symmetric--transitive closure of a relation that can be used
within our framework. Finally, we show empirically that our framework can
handle arbitrary datalog programs while outperforming existing approaches,
often by orders of magnitude.

\end{abstract}

\section{Introduction}\label{sec:introduction}

Datalog \cite{abiteboul1995foundations} is a prominent rule language whose
popularity is mainly due to its ability to express recursive definitions such
as transitive closure. Datalog captures OWL 2 RL \cite{motik2009owl} ontologies
with SWRL rules \cite{horrocks2004swrl}, so it supports query answering on the
Semantic Web. It has been implemented in many systems, including but not
limited to WebPIE \cite{urbani2012webpie}, VLog \cite{urbani2016column},
Oracle's RDF Store \cite{wu2008implementing}, OWLIM \cite{bishop2011owlim}, and
RDFox \cite{nenov2015rdfox}.

Datalog reasoning is often realised by precomputing and storing all
consequences of a datalog program and a set of facts; this process and its
output are both called \emph{materialisation}. A materialisation must be
updated when the input facts change, but doing so `from scratch' can be
inefficient if changes are small. To minimise the overall work,
\emph{incremental maintenance algorithms} have been developed. These include
the well-known \emph{Delete/Rederive} (DRed)
\cite{gupta1993maintaining,staudt1996incremental} and \emph{Counting}
\cite{gupta1993maintaining} algorithms, and the more recent
\emph{Backward/Forward} (B/F) \cite{motik2015incremental}, DRed$^c$, and
B/F$^c$ \cite{DBLP:conf/aaai/HuMH18} algorithms.

Materialisation and all aforementioned incremental algorithms compute rule
consequences using \emph{semina\"ive evaluation}
\cite{abiteboul1995foundations}. The main benefit of this approach is that each
applicable inference is performed exactly once. However, all consequences of
certain rules or rule combinations can actually be computed without considering
every applicable inference. For example, consider applying a program that
axiomatises a relation $R$ as symmetric and transitive to input facts that
describe a connected graph consisting of $n$ vertices. In
Section~\ref{sec:motivation} we show that computing all consequences using
semina\"ive evaluation involves $O(n^3)$ rule applications, whereas a custom
algorithm can achieve the same goal using only $O(n^2)$ steps. Since
incremental maintenance algorithms are based on the semina\"ive algorithm, they
can suffer from similar deficiencies.

Approaches that can maintain the closure of specific datalog programs have
already been considered in the literature. For example, maintaining transitive
closure of a graph has been studied extensively
\cite{ibaraki1983line,la1987maintenance,king1999fully,demetrescu2000fully}.
Subercaze et al.\ (\citeyear{DBLP:journals/pvldb/SubercazeGCL16}) presented an
algorithm for the materialisation of the transitive and symmetric properties in
RDFS-Plus. Dong, Su, and Topor (\citeyear{DBLP:journals/amai/DongST95}) showed
that insertions into a transitively closed relation can be maintained by
evaluating four nonrecursive first-order queries. However, these approaches can
only handle datalog programs for which they have been specifically
developed---that is, the programs are not allowed to contain any additional
rules. The presence of other rules introduces additional complexity since
updates computed by specialised algorithms must be propagated to the remaining
rules and vice versa. Moreover, many of these approaches cannot handle deletion
of input facts, which is a key problem in incremental reasoning. Thus, it is
currently not clear whether and how customised algorithms can be used in
general-purpose datalog systems that must handle arbitrary datalog rules and
support incremental additions and deletions.

To address these issues, in this paper we present a modular framework for
materialisation computation and incremental materialisation maintenance that
can integrate specialised reasoning algorithms with the semina\"ive evaluation.
The framework partitions the rules of a datalog program into disjoint subsets
called \emph{modules}. For each module, four pluggable functions are used to
compute certain consequences of the module's rules; there are no restrictions
on how these functions are implemented, as long as their outputs satisfy
certain conditions. Moreover, if no specialised algorithm for a module is
available, the four functions can be implemented using semina\"ive evaluation.
Thus, our framework can efficiently handle certain combinations of rules, but
it can also handle arbitrary rules while avoiding repeated inferences.

We then examine a module that axiomatises the transitive closure, and a module
that axiomatises the symmetric--transitive closure. These modules capture node
reachability in directed and undirected graphs, respectively, both of which
frequently occur in practice and are thus highly relevant. We present the
functions necessary to integrate these modules into our framework and show that
they satisfy the properties needed for correctness. We also discuss the kinds
of input that are likely to benefit from modular reasoning.

We have implemented our algorithms and compared them on several real-life and
synthetic datasets. Our experiments illustrate the potential benefits of the
proposed solution: our approach often outperforms state-of-the-art algorithms,
sometimes by orders of magnitude. Our system and test data are available
online.\footnote{\url{http://krr-nas.cs.ox.ac.uk/2018/modular/}} All proofs of
our results are given in \iftoggle{withappendix}{the appendix.}{a technical
report~\cite{modularextendedversion}.}

\section{Preliminaries}\label{sec:preliminaries}

We now introduce datalog with stratified negation. A \emph{term} is a constant
or a variable. An \emph{atom} has the form ${P(t_1,\dots,t_k)}$, where $P$ is a
$k$-ary \emph{predicate} with ${k \geq 0}$, and each $t_i$, ${1 \leq i \leq
k}$, is a term. A \emph{fact} is a variable-free atom, and a \emph{dataset} is
a finite set of facts. A rule $r$ has the form
\begin{displaymath}
    B_1 \wedge \dots \wedge B_m \wedge \stnot B_{m+1} \wedge \dots \wedge \stnot B_n \rightarrow H,
\end{displaymath}
where ${0 \leq m \leq n}$, and $B_i$ and $H$ are atoms. For $r$ a rule,
${\head{r} = H}$ is the \emph{head}, ${\pbody{r} = \{ B_1, \dots, B_m \}}$ is
the set of \emph{positive body atoms}, and ${\nbody{r} = \{ B_{m+1}, \dots, B_n
\}}$ is the set of \emph{negative body atoms}. Each rule $r$ must be
\emph{safe}---that is, each variable occurring in $r$ must occur in at least
one positive body atom. A \emph{program} is a finite set of rules.

A \emph{stratification} $\lambda$ of a program $\Pi$ maps each predicate
occurring in $\Pi$ to a positive integer such that, for each rule ${r \in \Pi}$
with predicate $P$ in its head, ${\lambda(P) \geq \lambda(R)}$ (resp.\
${\lambda(P) > \lambda(R)}$) holds for each predicate $R$ occurring in
$\pbody{r}$ (resp.\ $\nbody{r}$). Such $r$ is \emph{recursive} w.r.t.\
$\lambda$ if ${\lambda(P) = \lambda(R)}$ holds for some predicate $R$ occurring
in $\pbody{r}$; otherwise, $r$ is \emph{nonrecursive} w.r.t.\ $\lambda$.
Program $\Pi$ is \emph{stratifiable} if a stratification $\lambda$ of $\Pi$
exists. For $s$ an integer, the \emph{stratum} $s$ of $\Pi$ is the program
$\strat{\Pi}{s}$ containing each rule ${r \in \Pi}$ whose head predicate $P$
satisfies ${\lambda(P) = s}$. Moreover, let $\rstrat{\Pi}{s}$ and
$\nrstrat{\Pi}{s}$ be the recursive and the nonrecursive subsets, respectively,
of $\strat{\Pi}{s}$. Finally, let ${\Out{s} = \{ P(c_1,\dots,c_n) \mid
\lambda(P) = s \text{ and } c_i \text{ are constants} \}}$.

A \emph{substitution} $\sigma$ is a mapping of finitely many variables to
constants. For $\alpha$ a term, an atom, a rule, or a set thereof,
$\alpha\sigma$ is the result of replacing each occurrence of a variable $x$ in
$\alpha$ with $\sigma(x)$, provided that the latter is defined.

If $r$ is a rule and $\sigma$ is a substitution mapping all variables of $r$ to
constants, then rule $r\sigma$ is an \emph{instance} of $r$. For $I$ a dataset,
we define the set $\apply{\Pi}{I}$ of all facts obtained by applying a program
$\Pi$ to $I$ as
\begin{displaymath}
    \apply{\Pi}{I} = \bigcup_{r \in \Pi} \{ \head{r\sigma} \mid \pbody{r\sigma} \subseteq I \text{ and } \nbody{r\sigma} \cap I = \emptyset \}.
\end{displaymath}
Let $E$ be a dataset (called \emph{explicit facts}) and let $\lambda$ be a
stratification of $\Pi$ with maximum stratum index $S$. Then, let
${I^{0}_{\infty} = E}$; for each ${s \geq 1}$, let ${I^s_0 =
I^{s-1}_{\infty}}$, let
\begin{displaymath}
    I^s_i = I^s_{i-1} \cup \apply{\Pi^s}{I^{s}_{i-1}} \text{ for } i > 0, \text{ and let } I^s_{\infty} = \bigcup_{i \geq 0}{I^s_i}.
\end{displaymath}
Set $I^S_{\infty}$ is the \emph{materialisation} of $\Pi$ w.r.t.\ $E$ and
$\lambda$. It is known that $I^S_{\infty}$ does not depend on $\lambda$, so we
write it as $\mat{\Pi}{E}$.

\section{Motivation}\label{sec:motivation}

In this section we show how custom algorithms can handle certain rule
combinations much more efficiently than semina\"ive evaluation. We consider
here only materialisation, but similar observations apply to incremental
maintenance algorithms as most of them use variants of semina\"ive evaluation.

\subsection{Semina\"ive Evaluation}

The semina\"ive algorithm \cite{abiteboul1995foundations} takes as input a set
of explicit facts $E$, a program $\Pi$, and a stratification $\lambda$ of
$\Pi$, and it computes $\mat{\Pi}{E}$. To apply each rule instance at most
once, in each round of rule application it identifies the `newly applicable'
rule instances (i.e., instances that depend on a fact derived in the previous
round) as shown in Algorithm~\ref{alg:seminaive}. For each stratum, the
algorithm initialises $\Delta$, the set of newly derived facts, by combining
the explicit facts in the current stratum (${E \cap \Out{s}}$) with the facts
derivable from previous strata via nonrecursive rules
($\apply{\nrstrat{\Pi}{s}}{I}$). Then, in
lines~\ref{alg:seminaive:innerloopstarts}--\ref{alg:seminaive:innerloopends} it
iteratively computes all consequences of $\Delta$. To this end, in
line~\ref{alg:seminaive:rec} it uses operator $\apply{\Pi}{I \appargs \Delta}$,
which extends $\apply{\Pi}{I}$ to allow identifying `newly applicable' rule
instances. Specifically, given datasets $I$ and ${\Delta \subseteq I}$,
operator $\apply{\Pi}{I \appargs \Delta}$ returns a set containing
$\head{r\sigma}$ for each rule ${r \in \Pi}$ and substitution $\sigma$ such
that ${\pbody{r\sigma} \subseteq I}$ and ${\nbody{r\sigma} \cap I = \emptyset}$
hold (i.e., rule instance $r\sigma$ is applicable to $I$), but also
${\pbody{r\sigma} \cap \Delta \neq \emptyset}$ holds (i.e., a positive body
atom of $r\sigma$ occurs in the set of facts $\Delta$ derived in the previous
round of rule application). It is not hard to see that the algorithm computes
${I = \mat{\Pi}{E}}$, and that it considers each rule instance $r\sigma$ at
most once.

\begin{algorithm}[t]
\caption{\textsc{Mat}$(\Pi, \lambda, E)$}\label{alg:seminaive}
\begin{algorithmiccont}
    \State $I \defeq \emptyset$
    \For{\textbf{each} stratum index $s$ with $1 \leq s \leq S$}
        \State $\Delta \defeq (E \cap \Out{s}) \cup \apply{\nrstrat{\Pi}{s}}{I}$            \label{alg:seminaive:updatedelta}
        \While{$\Delta \neq \emptyset$}                                                     \label{alg:seminaive:innerloopstarts}
            \State $I \defeq I \cup \Delta$ 
            \State $\Delta \defeq \apply{\rstrat{\Pi}{s}}{I \appargs \Delta} \setminus I$   \label{alg:seminaive:rec}
        \EndWhile                                                                           \label{alg:seminaive:innerloopends}
    \EndFor                                                                                 \label{alg:seminaive:looponsends}
\end{algorithmiccont}
\end{algorithm}

\subsection{Problems with the Semina\"ive Evaluation}

Although semina\"ive evaluation does not repeat derivations, it always
considers each applicable rule instance. However, facts are often derived via
multiple, distinct rule instances; this is particularly common with recursive
rules, but it can also occur with nonrecursive rules only. We are unaware of a
general technique that can prevent such derivations. We next present two
programs for which materialisation can be computed without considering all
applicable rule instances, thus showing how semina\"ive evaluation can be
suboptimal.

\begin{example}\label{ex:trans}
Let $\Pi$ be the program containing rule \eqref{transitiverule} and let ${E =
\{ R(c_i,c_{i+1}) \mid 0 \leq i \leq n \}}$.
\begin{align}
    R(x,y) \wedge R(y,z) \rightarrow R(x,z)             \label{transitiverule}
\end{align}
Clearly, ${I = \mat{\Pi}{E} = \{ R(c_i,c_j) \mid 0 \leq i < j \leq n \}}$, so
each rule instance of the form
\begin{align}
    R(c_i,c_j) \wedge R(c_j,c_k) \rightarrow R(c_i,c_k) \label{transitiveruleinst}
\end{align}
with ${1 \leq i < j < k \leq n}$ is applicable to $I$.
Algorithm~\ref{alg:seminaive} considers all of these $O(n^3)$ rule instances.

We next present an outline of an approach that is still cubic in general, but
on this specific input runs in $O(n^2)$ time. The key is to distinguish the set
$X$ of `external' facts given to $\Pi$ as input from the `internal' facts
derived by $\Pi$. We can transitively close $R$ by iteratively considering
pairs of facts ${R(u,v) \in X}$ and ${R(v,w)}$. That is, we require the first
fact to be in $X$, but place no restriction on the second fact. (We could have
equivalently required the second fact to be in $X$.) In our example, we have
${X = E}$, so the algorithm considers only rule instances of the form
\begin{align}
    R(c_i,c_{i+1}) \wedge R(c_{i+1},c_k) \rightarrow R(c_i,c_k)
\end{align}
for ${0 \leq i < k \leq n}$, of which there are $O(n^2)$ many. Intuitively,
this is analogous to replacing the predicate $R$ in all explicit facts with
$X$, and using a linear rule
\begin{align}
    X(x,y) \wedge R(y,z) \rightarrow R(x,z)             \label{lintransitiverule}
\end{align}
instead of rule \eqref{transitiverule}. In our approach, however, other rules
can derive $R$-facts so the set $X$ is not fixed; thus, rule
\eqref{transitiverule} cannot be simply replaced with
\eqref{lintransitiverule}. Our approach `simulates' such linearisation, and it
can be expected to perform well whenever the other rules derive fewer facts
than rule \eqref{transitiverule}.
\end{example}

\begin{example}\label{ex:sym-trans}
Let $\Pi$ consist of rules \eqref{transitiverule} and \eqref{symrule}, and let
${E = \{ R (c_i,c_{i + 1}) \mid 1 \leq i < n \} \cup \{ R(c_n,c_1) \}}$.
\begin{align}
    R(x,y) \rightarrow R(y,x)                           \label{symrule}
\end{align}
Now ${I = \mat{\Pi}{E} = \{ R(c_i,c_j) \mid 1 \leq i, j \leq n \}}$, so each
instance of the form \eqref{transitiveruleinst} with ${1 \leq i,j,k \leq n}$ is
applicable to $I$. Algorithm~\ref{alg:seminaive} considers all of these
$O(n^3)$ rule instances.

However, we can view any relation $R$ as an undirected graph with $n$ vertices.
To compute the symmetric--transitive closure of $R$, we first compute the
connected components of $R$, and, for each connected component $C$, we
enumerate all ${u,v \in C}$ and derive $R(u,v)$. The first step is linear in
the size of $R$ and the second step requires $O(n^2)$ time, so the algorithm
runs in $O(n^2)$ time on any $R$.
\end{example}

\section{Framework}\label{sec:framework}

In this section we present a general framework for materialisation and
incremental reasoning that can avoid the deficiencies outlined in
Section~\ref{sec:motivation} for certain rule combinations. Our framework
focuses on recursive rules only: nonrecursive rules $\nrstrat{\Pi}{s}$ are
evaluated just once in each stratum, which is usually efficient. In contrast,
the recursive part $\rstrat{\Pi}{s}$ of each stratum $\strat{\Pi}{s}$ must be
evaluated iteratively, which is a common source of inefficiency. Thus, our
framework splits $\rstrat{\Pi}{s}$ into $n(s)$ mutually disjoint, nonempty
programs $\rstrat{\Pi}{s,i}$, ${1 \leq i \leq n(s)}$, called \emph{modules}.
(We let ${n(s) = 0}$ if ${\rstrat{\Pi}{s} = \emptyset}$.) Our notion of modules
should not be confused with ontology modules: the latter are subsets of an
ontology that are semantically independent from each other in a well-defined
way, whereas our modules are just arbitrary program subsets. Each module is
handled using `plugin' functions that compute certain consequences of
$\rstrat{\Pi}{s,i}$. These functions can be implemented as desired, as long as
their results satisfy certain properties that guarantee correctness. We present
our framework in two steps: in Section~\ref{sec:framework:materialisation} we
consider materialisation, and in Section~\ref{sec:framework:incremental} we
focus on incremental reasoning. Then, in Sections~\ref{sec:trans}
and~\ref{sec:sym-trans} we discuss how to realise these `plugin' functions for
certain common modules.

Before proceeding, we generalise operator $\apply{\Pi}{I \appargs \Delta}$ as
follows. Given datasets $\ipos$, $\ineg$, $\Deltapos$, and $\Deltaneg$ where
${\Deltapos \subseteq \ipos}$ and ${\Deltaneg \cap \ineg = \emptyset}$, let
\begin{displaymath}
    \begin{array}{@{}l@{}}
        \apply{\Pi}{\ipos, \ineg \appargs \Deltapos, \Deltaneg} = \bigcup_{r \in \Pi} \{ \head{r\sigma} \; \mid \\[1ex]
        \hspace{2.2cm} \pbody{r\sigma} \subseteq \ipos \text{ and } \nbody{r\sigma} \cap \ineg = \emptyset, \text{ and} \\[0.3ex]
        \hspace{2.2cm} \pbody{r\sigma} \cap \Deltapos \neq \emptyset \text{ or } \nbody{r\sigma} \cap \Deltaneg \neq \emptyset \}.
    \end{array}
\end{displaymath}
When the condition in the last line is not required, we simply write
$\apply{\Pi}{\ipos,\ineg}$. Moreover, we omit $\ineg$ when ${\ipos = \ineg}$,
and we omit $\Deltaneg$ when ${\Deltaneg = \emptyset}$. Intuitively, this
operator computes the consequences of $\Pi$ by evaluating the positive and the
negative body atoms in $\ipos$ and $\ineg$, respectively, while ensuring in
each derivation that either a positive or a negative body atom is true in
$\Deltapos$ or $\Deltaneg$, respectively. Our incremental algorithm uses this
operator to identify the consequences of $\Pi$ that are affected by the changes
to the facts matching the positive and the negative body atoms of the rules in
$\Pi$. For example, if the facts in $\Deltapos$ are added to (resp.\ removed
from) the materialisation, then $\apply{\Pi}{\ipos \appargs \Deltapos}$
contains the consequences of the rule instances that start (resp.\ cease) to be
applicable because a positive body atom matches to a fact in $\Deltapos$. The
set $\Deltaneg$ is used to analogously capture the consequences of the negative
body atoms of the rules in $\Pi$.

\subsection{Computing the Materialisation}\label{sec:framework:materialisation}

Our modular materialisation algorithm uses a `plugin' function $\addfn{s,i}$
for each module $\rstrat{\Pi}{s,i}$. The function takes as arguments datasets
$\ipos$, $\ineg$, and $\Delta$ such that ${\Delta \subseteq \ipos}$, and it
closes $\ipos$ with all consequences of $\rstrat{\Pi}{s,i}$ that depend on
$\Delta$. Each invocation of these functions must satisfy the following
properties in order to guarantee correctness of our algorithm.

\begin{definition}\label{def:add}
    Function $\addfn{}$ \emph{captures} a datalog program $\Pi$ on datasets
    $\ipos$, $\ineg$, and $\Delta$ with ${\Delta \subseteq \ipos}$ if the
    result of $\add{}{\ipos}{\ineg}{\Delta}$ is the smallest dataset $J$ that
    satisfies ${\apply{\Pi}{\ipos \cup J, \ineg \appargs \Delta \cup J}
    \subseteq \ipos \cup J}$.
\end{definition}

For brevity, in the rest of the paper we often say just `$\addfn{}$ captures
$\Pi$' without specifying the datasets whenever the latter are clear from the
context. In the absence of a customised algorithm, $\addfn{}$ can always be
realised using the semina{\"i}ve evaluation strategy as follows:
\begin{itemize}
    \item let ${\Delta_0 = \Delta}$ and $J_0 = \emptyset$,

    \item for $i$ starting with 0 onwards, if ${\Delta_i = \emptyset}$, stop
    and return $J_i$; otherwise, let ${\Delta_{i+1} = \apply{\Pi}{\ipos \cup
    J_i, \ineg \appargs \Delta_i} \setminus (\ipos \cup J_i)}$ and ${J_{i+1} =
    J_i \cup \Delta_{i+1}}$ and proceed to $i+1$.
\end{itemize}
However, a custom implementation of $\addfn{}$ will typically not examine all
rule instances from the above computation in order to optimise reasoning with
certain modules.

Algorithm~\ref{and:matmod} formalises our modular approach to datalog
materialisation. It takes as input a program $\Pi$, a stratification $\lambda$
of $\Pi$, and a set of explicit facts $E$, and it computes $\mat{\Pi}{E}$. The
algorithm's structure is similar to Algorithm~\ref{alg:seminaive}. For each
stratum of $\Pi$, both algorithms first apply the nonrecursive rules, and then
they apply the recursive rules iteratively up to a fixpoint. The main
difference is that, given a set of facts $\Delta$ derived from the previous
iteration, Algorithm~\ref{and:matmod} computes the consequences of $\Delta$ for
each module independently using $\addfn{s,i}$
(line~\ref{and:matmod:updatedeltai}); note that each $\Delta_i$ is closed under
$\rstrat{\Pi}{s,i}$, which is key to the performance of our approach. The
algorithm then combines the consequences of all modules
(line~\ref{and:matmod:combinedelta}) before proceeding to the next iteration.

If each $\addfn{s,i}$ function is implemented using semina\"ive evaluation as
described earlier, then the algorithm does not consider a rule instance more
than once. This is achieved by passing ${\Delta \setminus \Delta_i}$ to
$\addfn{s,i}$ in line~\ref{and:matmod:updatedeltai}: only facts derived by
other modules in the previous iteration are considered `new' for $\addfn{s,i}$,
which is possible since the facts in $\Delta_i$ have been produced by the $i$th
module in the previous iteration. Theorem~\ref{theorem:correctness-mat}
captures these properties formally.

\begin{restatable}{theorem}{correctnessmat}\label{theorem:correctness-mat}
    Algorithm~\ref{and:matmod} computes $I$ as $\mat{\Pi}{E}$ if function
    $\addfn{s,i}$ captures $\rstrat{\Pi}{s,i}$ in each of its calls. Moreover,
    if all $\addfn{s,i}$ use the semina{\"i}ve strategy, each applicable rule
    instance is considered at most once.
\end{restatable}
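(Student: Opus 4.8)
The plan is to prove both claims by induction on the strata of $\lambda$, reducing the global statement to a single-stratum invariant and then analysing the inner loop of Algorithm~\ref{and:matmod}. I would maintain the invariant that, immediately after the iteration for stratum $s$ finishes, $I$ coincides with the set of facts of $\mat{\Pi}{E}$ whose predicate lies in a stratum ${\leq s}$; the invariant holds vacuously before stratum $1$ since $I = \emptyset$, and, using the fact recalled in the preliminaries that $I^S_\infty$ is independent of $\lambda$, the case ${s = S}$ yields ${I = \mat{\Pi}{E}}$. A key preliminary observation I would state up front is that stratification forces every negative body atom of a rule in $\strat{\Pi}{s}$ to have a predicate of strictly smaller stratum; hence, throughout the processing of stratum $s$ the negative argument $\ineg$ is frozen at the already-computed materialisation of the lower strata, so $\apply{\rstrat{\Pi}{s}}{\ipos, \ineg}$ is monotone in $\ipos$ and the least fixpoints used below exist and are unique. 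This justifies comparing the modular loop against a single global least fixpoint rather than a schedule-dependent one.

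The inductive step splits into soundness and completeness. For \emph{soundness} I would show that every fact added to $I$ is a consequence of $\strat{\Pi}{s}$ applied to facts already in $I$. This is immediate for the explicit facts ${E \cap \Out{s}}$ and the $\nrstrat{\Pi}{s}$ contribution; for the recursive contribution it reduces to showing that $\add{s,i}{\ipos}{\ineg}{\Delta}$ returns only heads of instances of $\rstrat{\Pi}{s,i}$ applicable to $\ipos \cup J$. Here I would observe that, by the monotonicity above, the family of $J$ satisfying the closure condition ${\apply{\rstrat{\Pi}{s,i}}{\ipos \cup J, \ineg \appargs \Delta \cup J} \subseteq \ipos \cup J}$ is closed under intersection, so the smallest such $J$ exists and equals the least fixpoint computed by the seminaive realisation of $\addfn{}$ given after Definition~\ref{def:add}; since that realisation only ever adds heads of applicable instances, the output of $\addfn{s,i}$ is sound.

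\emph{Completeness} is the crux and the step I expect to be hardest. On termination of the loop for stratum $s$ we have ${\Delta = \emptyset}$, so each module returned a set contained in $I$; by the capturing property this gives ${\apply{\rstrat{\Pi}{s,i}}{I, \ineg \appargs \Delta'} \subseteq I}$ for the \emph{last} delta $\Delta'$ only. To upgrade this to full closure ${\apply{\rstrat{\Pi}{s,i}}{I, \ineg} \subseteq I}$, hence to ${\apply{\rstrat{\Pi}{s}}{I, \ineg} \subseteq I}$ since the modules partition $\rstrat{\Pi}{s}$, I would argue over the whole run that every applicable instance of module $i$ is eventually considered. The delicate point is the optimisation of passing ${\Delta \setminus \Delta_i}$ rather than $\Delta$ in line~\ref{and:matmod:updatedeltai}. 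Fix an instance $r\sigma$ of $\rstrat{\Pi}{s,i}$ applicable to the final $I$ and let $F$ be the positive body atom derived last. If $F$ was produced by some other module, then ${F \in \Delta \setminus \Delta_i}$ at the iteration following its derivation, so the $\addfn{s,i}$ call receives $F$ as new and, all other body atoms being present, fires $r\sigma$ and derives $\head{r\sigma}$. If instead $F$ was produced by module $i$ itself, then ${F \in \Delta_i}$ is deliberately withheld, but in that case $r\sigma$ was already fired \emph{inside} the $\addfn{s,i}$ call that produced $F$, because that call closes its own module to a fixpoint. Either way ${\head{r\sigma} \in I}$. I would make this precise with an invariant asserting, after each iteration, that every instance of $\rstrat{\Pi}{s,i}$ whose last-derived positive atom already lies in $I$ has had its head derived.

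Finally, for the non-repetition claim I would track, for each instance $r\sigma$, the unique moment it is considered. Within a single $\addfn{s,i}$ call realised by the seminaive strategy, the $\appargs$-restriction guarantees each instance is inspected at most once. Across iterations, a fact is handed to module $i$ as `new' at most once---namely at the single iteration after it is first derived, and only when it was produced by a \emph{different} module---because subtracting $\Delta_i$ suppresses exactly the facts the module derived itself and which it has already used internally. Combining the within-call and across-iteration counts shows every applicable instance is considered at most once, matching the guarantee of Algorithm~\ref{alg:seminaive}. The main obstacle throughout is reconciling the per-module internal closures, the combination step in line~\ref{and:matmod:combinedelta}, and the outer iteration with the single global fixpoint; once the `last-derived atom' bookkeeping above is in place, soundness and non-repetition follow comparatively directly.
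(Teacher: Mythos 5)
Your plan follows the same route as the paper's proof: induction over strata, a characterisation of the capturing condition of Definition~\ref{def:add} as the least fixpoint computed by the semina\"ive realisation (the paper's property~\eqref{semicomputesadd}), soundness of module outputs using the fact that negative atoms mention only lower strata, completeness via an analysis of the last-derived positive body atom, and non-repetition via freshness of the sets passed in line~\ref{and:matmod:updatedeltai}. So the assessment comes down to whether your crux step is watertight, and as stated it is not.

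Your completeness dichotomy rests on a single body atom $F$ ``derived last'' and on \emph{which module produced it}. Both notions break down when several body atoms of $r\sigma$ first appear in the same iteration of the inner loop, produced by different modules, and when one fact is simultaneously output by several modules (the sets $\Delta_1, \dots, \Delta_{n(s)}$ combined in line~\ref{and:matmod:combinedelta} need not be disjoint). Concretely, let module $i$ be $\{ A(x) \wedge B(x) \rightarrow C(x), \; D(x) \rightarrow A(x) \}$, let another module be $\{ E(x) \rightarrow B(x) \}$, and suppose some iteration starts with ${\Delta = \{ D(a), E(a) \}}$. The next iteration starts with ${\Delta = \{ A(a), B(a) \}}$, where ${A(a) \in \Delta_i}$ and ${B(a) \notin \Delta_i}$; both atoms of the instance ${A(a) \wedge B(a) \rightarrow C(a)}$ are ``derived last''. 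If you designate ${F = A(a)}$, your Case 2 asserts that this instance was already fired inside the call that produced $A(a)$, which is false---$B(a)$ was not available to that call, and $C(a)$ is in fact obtained only in the following iteration via the trigger ${B(a) \in \Delta \setminus \Delta_i}$. The repair is the paper's set-based split: with $j'$ the largest iteration index at which $\pbody{r\sigma}$ meets that iteration's $\Delta$, distinguish (a) $\pbody{r\sigma}$ meets ${\Delta \setminus \Delta_i}$ at iteration $j'$, in which case the call in line~\ref{and:matmod:updatedeltai} at iteration $j'$ receives a trigger and its closure condition yields $\head{r\sigma}$, from (b) all atoms of $\pbody{r\sigma}$ in that $\Delta$ lie in $\Delta_i$, in which case every body atom was already in ${\ipos \cup J}$ for the call at iteration ${j'-1}$, and the head is forced precisely because Definition~\ref{def:add} places $J$ itself inside the delta argument ${\Delta \cup J}$. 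The same correction is needed in your non-repetition argument, where ``produced by a different module'' must likewise be replaced by ``not in $\Delta_i$''; with this tie-aware bookkeeping in place, your proof goes through and coincides with the paper's.
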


\begin{algorithm}[t]
\caption{\textsc{Mat-Mod}$(\Pi, \lambda, E)$}\label{and:matmod}
\begin{algorithmiccont}
    \State $I \defeq \emptyset$
    \For{\textbf{each} stratum index $s$ with $1 \leq s \leq S$}                    \label{and:matmod:looponsstarts}
        \State $\Delta_1 \defeq \dots \defeq \Delta_{n(s)} \defeq \emptyset$        \label{and:matmod:init}
        \State $\Delta \defeq (E \cap \Out{s}) \cup \apply{\nrstrat{\Pi}{s}}{I}$    \label{and:matmod:updatedelta}
        \While{$\Delta \neq \emptyset$}
            \State $I \defeq I \cup \Delta$                                         \label{and:matmod:innerloopstarts}
            \For{\textbf{each} $i$ with $1 \leq i \leq n(s)$}
                \State $\Delta_i \defeq \add{s,i}{I}{I}{\Delta \setminus \Delta_i}$ \label{and:matmod:updatedeltai}
            \EndFor
            \State $\Delta \defeq \Delta_1 \cup \dots \cup \Delta_{n(s)}$           \label{and:matmod:combinedelta}
        \EndWhile                                                                   \label{and:matmod:innerloopends}
    \EndFor \label{and:matmod:looponsends}
\end{algorithmiccont}
\end{algorithm}

\subsection{Incremental Updates}\label{sec:framework:incremental}

Our modular incremental materialisation maintenance algorithm is based on the
DRed$^c$ algorithm by \citeauthor{DBLP:conf/aaai/HuMH18}
(\citeyear{DBLP:conf/aaai/HuMH18}), which is a variant of the well-known DRed
algorithm \cite{gupta1993maintaining}. For each fact, DRed$^c$ maintains two
counters that track the number of nonrecursive and recursive derivations of the
fact. The algorithm proceeds in three steps. During the deletion phase,
DRed$^c$ iteratively computes the consequences of the deleted facts, similar to
DRed, while adjusting the counters accordingly. However, to optimise
overdeletion, deletion propagation stops on facts with a nonzero nonrecursive
counter. In the one-step rederivation phase, DRed$^c$ identifies the facts that
were overdeleted but can be rederived from the remaining facts in one step by
simply checking the recursive counters: if the counter is nonzero, then the
corresponding fact is rederived. In the insertion phase, DRed$^c$ computes the
consequences of the rederived and the inserted facts using semina\"ive
evaluation, which we have already discussed.

Our modular incremental algorithm handles nonrecursive rules in the same way as
DRed$^c$. Thus, the nonrecursive counters, which record the number of
nonrecursive derivations of each fact, can be maintained globally just as in
DRed$^c$. In contrast, as discussed in Section~\ref{sec:motivation}, custom
algorithms for recursive modules will usually not consider all applicable rule
instances, so counters of recursive derivations cannot be maintained globally.
Nevertheless, certain modules can maintain recursive counters internally (e.g.,
the module based on the semina\"ive evaluation can do so).

In addition to function $\addfn{s,i}$ from
Section~\ref{sec:framework:materialisation}, our modular incremental reasoning
algorithm uses three further functions: $\difffn{s,i}$, $\delfn{s,i}$, and
$\redfn{s,i}$. Definition~\ref{def:diff} captures the requirements on
$\difffn{s,i}$. Intuitively, $\diff{s,i}{\ipos}{\Deltapos}{\Deltaneg}$
identifies the consequences of $\rstrat{\Pi}{s,i}$ affected by the addition of
the facts in $\Deltapos$ and removal of the the facts $\Deltaneg$,
respectively, with both sets containing facts from earlier strata.

\begin{definition}\label{def:diff}
    Function $\difffn{}$ \emph{captures} a datalog program $\Pi$ on datasets
    $\ipos$, $\Deltapos$, and $\Deltaneg$ where ${\Deltapos \subseteq \ipos}$,
    ${\Deltaneg \cap \ipos = \emptyset}$, and both $\Deltapos$ and $\Deltaneg$
    do not contain predicates occurring in rule heads in $\Pi$ if
    $\diff{}{\ipos}{\Deltapos}{\Deltaneg} = \apply{\Pi}{\ipos \appargs
    \Deltapos, \Deltaneg}$.
\end{definition}

Function $\delfn{s,i}$ captures overdeletion: if the facts in $\Delta$ are
deleted, then ${\del{s,i}{\ipos}{\ineg}{\Delta}{\Cnr}}$ returns the
consequences of $\rstrat{\Pi}{s,i}$ that must be overdeleted as well. The
function can use the nonrecursive counters $\Cnr$ in order to stop overdeletion
as in DRed$^c$. We do not specify exactly what the functions must return: as we
discuss in Section~\ref{sec:sym-trans}, computing the smallest set that needs
to be overdeleted might require considering all rule instances as in DRed$^c$,
which would miss the point of modular reasoning. Instead, we specify the
required output in terms of a lower bound $J_l$ and an upper bound $J_u$.
Intuitively, $J_l$ and $J_u$ contain facts that would be overdeleted in
DRed$^c$ and DRed, respectively.

\begin{definition}\label{def:del}
    Function $\delfn{}$ \emph{captures} a datalog program $\Pi$ on datasets
    $\ipos$, $\ineg$, $\Delta$ with ${\Delta \subseteq \ipos}$, and a mapping
    $\Cnr$ of facts to integers if ${J_l \subseteq
    \del{}{\ipos}{\ineg}{\Delta}{\Cnr} \subseteq J_u}$ where
    \begin{itemize}
        \item the \emph{lower bound} $J_l$ is the smallest dataset such that,
        for each ${F \in \apply{\Pi}{\ipos,\ineg \appargs \Delta \cup J_l}}$,
        either ${F \in \Delta \cup J_l}$ or ${\Cnr(F) > 0}$ holds, and

        \item the \emph{upper bound} $J_u$ is the smallest dataset that
        satisfies ${\apply{\Pi}{\ipos, \ineg \appargs \Delta \cup J_u}
        \subseteq \Delta \cup J_u}$.
    \end{itemize}
\end{definition}

Finally, function $\redfn{s,i}$ captures rederivation: if facts in $\Delta$ are
overdeleted, then $\red{s,i}{\ipos}{\ineg}{\Delta}$ returns all facts in
$\Delta$ that can be rederived from ${\ipos \setminus \Delta}$ and
$\rstrat{\Pi}{s,i}$ in \emph{one or more steps}. This is different from DRed
and DRed$^c$, which both perform only one-step rederivation. This change is
important in our framework because, as we shall see in
Section~\ref{sec:sym-trans}, $\redfn{s,i}$ provides the opportunity for a
module to adjust its internal data structures after deletion.

\begin{definition}\label{def:red}
    Function $\redfn{s,i}$ \emph{captures} a datalog program $\Pi$ on datasets
    $\ipos$, $\ineg$, $\Delta$ with ${\Delta \subseteq \ipos}$ if the result of
    $\red{}{\ipos}{\ineg}{\Delta}$ is the smallest dataset $J$ that satisfies
    ${\apply{\Pi}{(\ipos \setminus \Delta) \cup J, \ineg} \cap \Delta \subseteq
    J}$.
\end{definition}

Algorithm~\ref{alg:dredmod} formalises our modular approach to incremental
maintenance. The algorithm takes as input a program $\Pi$, a stratification
$\lambda$ of $\Pi$, a set of explicit facts $E$, the materialisation ${I =
\mat{\Pi}{E}}$, the sets of facts $E^-$ and $E^+$ to delete from and add to
$E$, and a map $\Cnr$ that records the number of nonrecursive derivations of
each fact. The algorithm updates $I$ to ${\mat{\Pi}{(E \setminus E^-) \cup
E^+}}$. We next describe the two main steps of the algorithm.

In the overdeletion phase, the algorithm first initialises the set of facts to
delete $\Delta$ as the union of the explicitly deleted facts $(E^- \cap
\Out{s})$ and the facts affected by changes in previous strata
(lines~\ref{alg:dredmod:delta-init} and~\ref{alg:dredmod:del:ND-init}). Then,
in lines~\ref{alg:dredmod:del:loopstarts}--\ref{alg:dredmod:del:loopends} the
algorithm computes all consequences of $\Delta$. In each iteration, function
$\delfn{s,i}$ is called for each module to identify the consequences of
$\rstrat{\Pi}{s,i}$ that must be overdeleted due to the deletion of $\Delta$
(line~\ref{alg:dredmod:del:updatedeltai}). As in Algorithm~\ref{and:matmod},
the third argument of $\delfn{s,i}$ is ${\Delta \setminus \Delta_i}$, which
guarantees that the function will not be applied to its own consequences.

In the second step, the algorithm first identifies the rederivable facts by
calling $\redfn{s,i}$ for each module
(lines~\ref{alg:dredmod:redinsert:recstarts}--\ref{alg:dredmod:redinsert:recends}).
Then, the consequences of the rederived facts, the explicitly added facts ($E^+
\cap \Out{s}$), and the facts added due to changes in previous strata are
computed in the loop of
lines~\ref{alg:dredmod:redinsert:insertloopstarts}--\ref{alg:dredmod:redinsert:insertloopends}
analogously to Algorithm~\ref{and:matmod}. Although $\redfn{s,i}$ rederives
facts in one or more steps as opposed to the one-step rederivation in DRed and
DRed$^c$, this extra effort is not repeated during insertion since
$\addfn{s,i}$ is not applied to the consequences of module $i$.
Theorem~\ref{theorem:correctness-dredmod} states that the algorithm is correct.

\begin{restatable}{theorem}{correctnessdredcmod}\label{theorem:correctness-dredmod}
    Algorithm~\ref{alg:dredmod} updates $I$ from $\mat{\Pi}{E}$ to
    ${\mat{\Pi}{(E \setminus E^-) \cup E^+}}$ if functions $\addfn{s,i}$,
    $\delfn{s,i}$, $\difffn{s,i}$, and $\redfn{s,i}$ capture
    $\rstrat{\Pi}{s,i}$ in all of their calls.
\end{restatable}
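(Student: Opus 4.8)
The plan is to prove the claim by induction on the stratum index $s$, mirroring the stratified definition of $\mat{\Pi}{\cdot}$ and the per-stratum argument behind Theorem~\ref{theorem:correctness-mat}. Write $E' = (E \setminus E^-) \cup E^+$ for the updated explicit facts, and let $I_{\mathrm{old}} = \mat{\Pi}{E}$ and $I_{\mathrm{new}} = \mat{\Pi}{E'}$; for a set $J$, let $J^{\le s}$ denote its restriction to facts whose predicate has stratum index at most $s$. The induction hypothesis I would carry is that, just before the algorithm begins processing stratum $s$, the value of $I$ agrees with $I_{\mathrm{new}}$ on all strata below $s$ and with $I_{\mathrm{old}}$ on stratum $s$ and above, and that the additions and deletions propagated from strata below $s$ have been recorded correctly. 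The goal of the inductive step is to show that after processing stratum $s$ we have $I^{\le s} = I_{\mathrm{new}}^{\le s}$ and that the stratum-$s$ changes are correctly exposed for stratum $s+1$. Since the nonrecursive rules and the cross-stratum seeding are handled exactly as in DRed$^c$, the substance lies in the recursive part $\rstrat{\Pi}{s}$, and the contributions of lower strata enter only through $\difffn{s,i}$, whose output is fixed by Definition~\ref{def:diff} as $\apply{\rstrat{\Pi}{s,i}}{\ipos \appargs \Deltapos, \Deltaneg}$.

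For the overdeletion phase I would characterise the total set $D$ of facts removed during stratum $s$, namely the union over all iterations and modules of the outputs of $\delfn{s,i}$. Using the lower- and upper-bound conditions of Definition~\ref{def:del} together with the passing of $\Delta \setminus \Delta_i$ to each module (which, as in Algorithm~\ref{and:matmod}, prevents a module from acting on its own output), I would show that the overdeletion loop reaches a combined fixpoint $D$ satisfying $J_l^{\mathrm{tot}} \subseteq D \subseteq J_u^{\mathrm{tot}}$, where $J_u^{\mathrm{tot}}$ is the full DRed-style overdeletion of stratum $s$ and $J_l^{\mathrm{tot}}$ is the smaller set at which DRed$^c$ would stop using the nonrecursive counters $\Cnr$. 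The two consequences to extract are \emph{completeness}, that every stratum-$s$ fact of $I_{\mathrm{old}}$ absent from $I_{\mathrm{new}}$ lies in $D$ (such a fact has $\Cnr(F) = 0$ and loses all recursive support, so the lower-bound condition forces overdeletion to propagate through it), and \emph{containment}, that $D \subseteq I_{\mathrm{old}}^{\le s}$, so nothing outside the old materialisation is ever removed.

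For the rederivation and insertion phases I would then show that the algorithm rebuilds exactly $I_{\mathrm{new}}^{\le s}$ from the surviving facts $I_{\mathrm{old}}^{\le s} \setminus D$. First, $\redfn{s,i}$ returns, by Definition~\ref{def:red}, the smallest set of overdeleted facts derivable from the surviving facts in one or more steps; since it draws only on $\ipos \setminus \Delta$, every fact it returns is genuinely supported and hence lies in $I_{\mathrm{new}}$ (soundness), while minimality rules out spurious facts. The subsequent insertion loop is structurally analogous to the inner loop of Algorithm~\ref{and:matmod}, so I would invoke the capturing property of $\addfn{s,i}$ (Definition~\ref{def:add}) and reuse the fixpoint argument of Theorem~\ref{theorem:correctness-mat} to conclude that, seeded with the rederived facts, the explicit additions $E^+ \cap \Out{s}$, and the $\difffn$-propagated additions, it closes the current facts under $\rstrat{\Pi}{s}$. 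Combining this closure with the surviving facts yields $I^{\le s} = I_{\mathrm{new}}^{\le s}$, completing the inductive step.

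The main obstacle I anticipate is showing that the final result is \emph{independent} of which set $D$ in the band $[J_l^{\mathrm{tot}}, J_u^{\mathrm{tot}}]$ the deletion phase actually produces, since $\delfn{s,i}$ is pinned down only up to these bounds and different modules may behave differently. The crux is proving that any still-true fact deleted over-eagerly (because $D$ exceeds $J_l^{\mathrm{tot}}$, up to $J_u^{\mathrm{tot}}$) is recovered by rederivation together with insertion. I would argue this by exhibiting $I_{\mathrm{new}}^{\le s}$ as the least fixpoint of the operator $\apply{\rstrat{\Pi}{s}}{\cdot}$ above the seed and then sandwiching: monotonicity of $\apply{\rstrat{\Pi}{s}}{\cdot}$ and completeness of $D$ force the reconstructed set to contain $I_{\mathrm{new}}^{\le s}$, while soundness of $\redfn{s,i}$ and the capturing property of $\addfn{s,i}$ force containment in $I_{\mathrm{new}}^{\le s}$, so equality holds for every admissible $D$. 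A secondary subtlety is that multi-step rederivation, unlike the one-step rederivation of DRed$^c$, must be shown not to reintroduce facts a single DRed$^c$ pass would have left deleted; this reduces to the same soundness property of $\redfn{s,i}$, since it only ever rederives from the surviving facts.
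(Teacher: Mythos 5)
Your proposal is correct in outline, but it takes a genuinely different route from the paper. The paper's proof is a \emph{reduction}: it defines two semina\"ive-style overdeletion operators, $\invsemi{\Pi}{\ipos}{\ineg}{\Delta}$ (no counters) and $\invsemic{\Pi}{\ipos}{\ineg}{\Delta}{\Cnr}$ (with counters), proves via two fixpoint inductions that these are exactly the upper bound $J_u$ and the lower bound $J_l$ of Definition~\ref{def:del}---i.e., that the declarative band of admissible $\delfn{s,i}$ outputs coincides with the band between DRed-style and DRed$^c$-style overdeletion---and then concludes in one sentence that correctness of Algorithm~\ref{alg:dredmod} ``follows from the correctness of DRed and DRed$^c$'' established in prior work; no stratum induction and no analysis of rederivation or insertion appears in the paper itself. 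You instead propose a self-contained argument: induction on strata, completeness and containment of the overdeletion fixpoint, soundness and minimality of $\redfn{s,i}$ per Definition~\ref{def:red}, reuse of the fixpoint machinery of Theorem~\ref{theorem:correctness-mat} for the insertion loop, and---crucially---a sandwich argument showing that the reconstructed stratum is independent of which $D$ in $[J_l, J_u]$ the deletion phase actually produces. What each buys: the paper's route is short and leverages prior results, but it leaves to the reader the verification that Algorithm~\ref{alg:dredmod}, with its per-module functions, multi-step rederivation, and arbitrary admissible $D$, really does simulate a DRed-family run; your route makes exactly that verification the centrepiece (which is arguably the real content of the theorem), at the price of essentially re-proving the DRed$^c$ machinery inside the induction. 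The one soft spot in your sketch is precisely that machinery: your claim that a fact in $\mat{\Pi}{E}$ but not in $\mat{\Pi}{(E \setminus E^-) \cup E^+}$ has $\Cnr(F) = 0$ holds only \emph{after} the counters have been adjusted for lower-strata changes and under the DRed$^c$ counter invariant, which you defer with ``handled exactly as in DRed$^c$''---so in the end your proof, too, quietly imports part of the prior correctness argument rather than being fully self-contained. Both proofs share the same key structural insight, namely that the admissible deletion sets form a band whose endpoints are the DRed$^c$ and DRed overdeletion sets; the difference is whether that observation is the whole proof (the paper) or one ingredient of it (your plan).
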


\begin{algorithm}[t]
\caption{\textsc{DRed$^c$-Mod}$(\Pi, \lambda, E, I, E^-, E^+, \Cnr)$}\label{alg:dredmod}
\begin{algorithmiccont}
    \State $D \defeq A \defeq \emptyset$, \quad $E^- = (E^- \cap E) \setminus E^+$, \quad $E^+ = E^+ \setminus E$                               \label{alg:dredmod:init}
    \For{\textbf{each} stratum index $s$ with ${1 \leq s \leq S}$}                                                                              \label{alg:dredmod:stratum-loop}
        \State \Call{Overdelete}{}                                                                                                              \label{alg:dredmod:Overdelete}
        \State \Call{Rederive-Insert}{}                                                                                                         \label{alg:dredmod:Insert}
    \EndFor                                                                                                                                     \label{alg:dredmod:stratum-loop:end}
    \State $E \defeq (E \setminus E^-) \cup E^+$, \quad $I \defeq (I \setminus D) \cup A$                                                       \label{alg:dredmod:update-I}
    \vspace{0.2cm}
    \Procedure{Overdelete}{}
        \State $\Delta_1 \defeq \dots \defeq \Delta_{n(s)} \defeq \emptyset$                                                                    \label{alg:dredmod:delta-init}
        \State $\Delta \defeq (E^- \cap \Out{s}) \cup \apply{\nrstrat{\Pi}{s}}{I \appargs D \setminus A, A \setminus D}$ and update $\Cnr$      \label{alg:dredmod:del:ND-init}
        \For{\textbf{each} $i$ with $1 \leq i \leq n(s)$}
            \State $\Delta \defeq \Delta \cup \diff{s,i}{I}{D \setminus A}{A \setminus D}$                                                      \label{alg:dredmod:reddel:diff}
        \EndFor
        \While{$\Delta \neq \emptyset$}                                                                                                         \label{alg:dredmod:del:loopstarts}
            \For{\textbf{each} $i$ with $1 \leq i \leq n(s)$}
                \State $\Delta_i \defeq \del{s,i}{I \setminus (D \setminus A)}{I \cup A}{\Delta \setminus \Delta_i}{\Cnr}$                      \label{alg:dredmod:del:updatedeltai}
            \EndFor
            \State $D \defeq D \cup \Delta$
            \State $\Delta \defeq \Delta_1 \cup \dots \cup \Delta_{n(s)}$
        \EndWhile                                                                                                                               \label{alg:dredmod:del:loopends}
    \EndProcedure
    \vspace{0.2cm}
    \Procedure{Rederive-Insert}{}
        \State $\Delta \defeq (E^+ \cap \Out{s}) \cup \apply{\nrstrat{\Pi}{s}}{(I \setminus D) \cup A \appargs A \setminus D, D \setminus A}$
        \Statex \hspace{1.5cm} and update $\Cnr$
        \For{\textbf{each} $i$ with $1 \leq i \leq n(s)$}                                                                                       \label{alg:dredmod:redinsert:recstarts}
            \State $\Delta_i \defeq \red{s,i}{I}{(I \setminus D) \cup A}{D \setminus A}$                                                        \label{alg:dredmod:redinsert:red}
            \State $\Delta \defeq \Delta \cup \Delta_i \cup \diff{s,i}{(I \setminus D) \cup A}{A \setminus D}{D \setminus A}$                   \label{alg:dredmod:redinsert:diff}
        \EndFor                                                                                                                                 \label{alg:dredmod:redinsert:recends}
        \While{$\Delta \neq \emptyset$}                                                                                                         \label{alg:dredmod:redinsert:insertloopstarts}
            \State $A \defeq A \cup \Delta$
            \For{\textbf{each} $i$ with $1 \leq i \leq n(s)$}
                \State $\Delta_i \defeq \add{s,i}{(I \setminus D) \cup A}{(I \setminus D) \cup A}{\Delta \setminus \Delta_i}$                   \label{alg:dredmod:redinsert:add}
            \EndFor
            \State $\Delta \defeq \Delta_1 \cup \dots \cup \Delta_{n(s)}$
        \EndWhile                                                                                                                               \label{alg:dredmod:redinsert:insertloopends}
    \EndProcedure
\end{algorithmiccont}
\end{algorithm}

\section{Transitive Closure}\label{sec:trans}

We now consider a module consisting of a single rule \eqref{transitiverule}
axiomatising a relation $R$ as transitive. Following the ideas from
Example~\ref{ex:trans}, we distinguish the `internal' facts produced by rule
\eqref{transitiverule} from the `external' facts produced by other rules. We
keep track of the latter in a global set $X_R$ that is initialised to the empty
set. A key invariant of our approach is that each fact $R(a_0,a_n)$ is produced
by a chain ${\{ R(a_0, a_1), \dots, R(a_{n-1}, a_n) \} \subseteq X_R}$ of
`external' facts. Thus, we can transitively close $R$ by considering pairs of
$R$-facts where at least one of them is contained in $X_R$, which can greatly
reduce the number of inferences. A similar effect could be achieved by
rewriting the input program: we introduce a fresh predicate $X_R$, and we
replace by $X_R$ each occurrence of $R$ in the head of a rule, as well as one
of the two occurrences of $R$ in the body of rule \eqref{transitiverule}. Such
an approach, however, introduces the facts containing the auxiliary predicate
$X_R$ into the materialisation and thus reveals implementation details to the
users. Moreover, the rederivation step can be realised very efficiently in our
approach.

Based on the above idea, function $\addfn{\mathsf{tc}(R)}$, shown in
Algorithm~\ref{alg:addtc}, essentially implements semina\"ive evaluation for
rule \eqref{lintransitiverule}: the loops in lines
\ref{alg:addtc:loop1starts}--\ref{alg:addtc:loop1ends} and
\ref{alg:addtc:loops2starts}--\ref{alg:addtc:loop2ends} handle the two delta
rules derived from \eqref{lintransitiverule}. For $\difffn{\mathsf{tc}(R)}$,
note that sets ${A \setminus D}$ and ${D \setminus A}$ in
lines~\ref{alg:dredmod:reddel:diff} and~\ref{alg:dredmod:redinsert:diff} of
Algorithm~\ref{alg:dredmod} always contain facts with predicates that do not
occur in $\rstrat{\Pi}{s,i}$ in rule heads; thus, since $R$ occurs in the head
of rule \eqref{transitiverule}, these sets contain facts whose predicate is
different from $R$, so $\difffn{\mathsf{tc}(R)}$ can simply return the empty
set. Function $\delfn{\mathsf{tc}(R)}$, shown in Algorithm~\ref{alg:deltc},
implements semina\"ive evaluation for rule \eqref{lintransitiverule}
analogously to $\addfn{\mathsf{tc}(R)}$. The main difference is that only facts
whose nonrecursive counter is zero are overdeleted, which mimics overdeletion
in DRed$^c$. As a result, not all facts processed in
lines~\ref{alg:deltc:loop1starts} and~\ref{alg:deltc:loop2starts} are added to
$J$ so, to avoid repeatedly considering such facts, the algorithm maintains the
set $S$ of `seen' facts. Finally, function $\redfn{\mathsf{tc}(R)}$, shown in
Algorithm~\ref{alg:redtc}, identifies for each source vertex $u$ all vertices
reachable by the external facts in $X_R$.

\begin{restatable}{theorem}{correctnesstrans}\label{theorem:correctness-trans}
    In each call in Algorithms~\ref{and:matmod} and~\ref{alg:dredmod},
    functions $\addfn{\mathsf{tc}(R)}$, $\delfn{\mathsf{tc}(R)}$,
    $\difffn{\mathsf{tc}(R)}$, and $\redfn{\mathsf{tc}(R)}$ capture a datalog
    program that axiomatises relation $R$ as transitive.
\end{restatable}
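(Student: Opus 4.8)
The plan is to check each of the four functions against its defining condition (Definitions~\ref{def:add}, \ref{def:diff}, \ref{def:del}, and~\ref{def:red}), with the program $\Pi$ being the single transitive rule \eqref{transitiverule}. The crux of the whole argument is one \emph{linearisation lemma}: under the invariant that the global set $X_R$ contains a chain ${\{ R(a_0,a_1),\dots,R(a_{n-1},a_n) \} \subseteq X_R}$ for every $R$-fact $R(a_0,a_n)$ the module produces, the transitive closure of the $R$-facts in $\ipos$ under the full rule \eqref{transitiverule} coincides with the closure under the linear rule \eqref{lintransitiverule} with the predicate $X$ interpreted as $X_R$. Soundness---the linear closure is contained in the full closure---is immediate because ${X_R \subseteq R}$ makes every firing of \eqref{lintransitiverule} a firing of \eqref{transitiverule}. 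For completeness I would argue by induction on chain length: given a witnessing external chain for $R(a_0,a_n)$, the fact is obtained by repeatedly prepending an external prefix edge $X_R(a_i,a_{i+1})$ to an already-derived suffix $R(a_{i+1},a_n)$, which is exactly what the two delta rules of \eqref{lintransitiverule} do. This lemma reduces all four claims to standard semina\"ive reasoning over a linear rule.

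The $\difffn{\mathsf{tc}(R)}$ case is the base step and is already sketched before the theorem: in every call $\Deltapos$ and $\Deltaneg$ contain only facts whose predicate does not occur in a head of $\rstrat{\Pi}{s,i}$, hence no $R$-facts. Since both body atoms of \eqref{transitiverule} use $R$ and the rule has no negative atoms, neither the positive delta condition ${\pbody{r\sigma} \cap \Deltapos \neq \emptyset}$ (as $\Deltapos$ has no $R$-facts) nor the negative one ${\nbody{r\sigma} \cap \Deltaneg \neq \emptyset}$ (as ${\nbody{r\sigma} = \emptyset}$) can ever fire, so ${\apply{\Pi}{\ipos \appargs \Deltapos, \Deltaneg} = \emptyset}$ and returning $\emptyset$ satisfies Definition~\ref{def:diff}. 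For $\addfn{\mathsf{tc}(R)}$, I would observe that the two loops of Algorithm~\ref{alg:addtc} implement precisely the semina\"ive evaluation of the delta rules of \eqref{lintransitiverule} with ${X = X_R}$ seeded by $\Delta$; by the linearisation lemma its output $J$ closes $\ipos$ under \eqref{transitiverule} on all derivations touching $\Delta$, and $\ineg$ is irrelevant since the rule has no negative atoms. Minimality of $J$ holds because semina\"ive evaluation adds a fact only when a genuine new inference produces it, so matching this against Definition~\ref{def:add} gives the claim.

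For $\delfn{\mathsf{tc}(R)}$ I would again identify Algorithm~\ref{alg:deltc} with semina\"ive overdeletion along the linear rule, now guarded by the nonrecursive counters $\Cnr$: the ``seen'' set ensures each candidate is examined once, and a fact $F$ is placed in $J$ only if ${\Cnr(F) = 0}$. Verifying Definition~\ref{def:del} is then a two-sided bound. The output contains the lower bound $J_l$ because every overdeletable fact with ${\Cnr(F) = 0}$ that is reachable through external chains is enumerated, and it is contained in the upper bound $J_u$ because the linearisation lemma guarantees that every fact it deletes is genuinely (re)derivable under \eqref{transitiverule} from $\Delta$, i.e.\ lies in the full overdeletion closure. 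For $\redfn{\mathsf{tc}(R)}$ I would show that recomputing, for each source vertex $u$, the set of vertices reachable from $u$ using only the surviving external edges ${X_R \setminus \Delta}$ yields precisely the smallest $J$ of Definition~\ref{def:red}: a fact ${R(u,w) \in \Delta}$ is rederivable from ${\ipos \setminus \Delta}$ in one or more steps iff $w$ is still reachable from $u$ via surviving external chains, which is once more the linearisation lemma applied to ${\ipos \setminus \Delta}$.

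The main obstacle is establishing and, above all, \emph{maintaining} the $X_R$ invariant across the interleaved calls made by Algorithms~\ref{and:matmod} and~\ref{alg:dredmod}, rather than any single fixpoint computation. Because other modules and strata feed new $R$-facts into $\ipos$ as ``external'' inputs while the module itself produces ``internal'' facts, I must verify that every fact the module treats as a chain edge has in fact been recorded in $X_R$ at the moment it is used, and that the overdeletion and rederivation phases keep $X_R$ synchronised with ${\ipos \setminus \Delta}$. Once the linearisation lemma is proved relative to a correctly maintained $X_R$, the four function-specific checks are routine semina\"ive arguments; the delicate part is the bookkeeping that ties $X_R$ to the framework's contract that $\addfn{s,i}$, $\delfn{s,i}$, and $\redfn{s,i}$ are only ever applied to non-self consequences (via the ${\Delta \setminus \Delta_i}$ argument).
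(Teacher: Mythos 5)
Your proposal follows essentially the same route as the paper's own proof: your ``linearisation lemma'' is exactly the paper's invariant ${\mat{\Pi^{\mathsf{tc}(R)}}{X_R} = R[\ipos \setminus \Delta]}$ proved by induction on the length of external chains in $X_R$, your $\difffn{\mathsf{tc}(R)}$ argument matches the paper's verbatim, and your reduction of $\addfn{\mathsf{tc}(R)}$ and $\delfn{\mathsf{tc}(R)}$ to semina\"ive evaluation of rule~\eqref{lintransitiverule} (with minimality via the smallest-fixpoint characterisation of the semina\"ive computation) and of $\redfn{\mathsf{tc}(R)}$ to reachability over surviving external edges is precisely how the paper's Claims are organised. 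The bookkeeping you flag as the remaining obstacle---maintaining the $X_R$ invariant across interleaved calls---is likewise handled in the paper only implicitly, by phrasing each claim conditionally so that its postcondition re-establishes the precondition of subsequent calls.
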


\begin{table}[t]
\begin{algorithm}[H]
\caption{$\add{\mathsf{tc}(R)}{\ipos}{\ineg}{\Delta}$}\label{alg:addtc}
\begin{algorithmiccont}
    \State $J \defeq \emptyset$, \quad $Q \defeq \Delta$, \quad $X_R \defeq X_R \cup \Delta$                \label{alg:addtc:init}
    \For{\textbf{each} $R(u,v) \in \Delta$ and \textbf{each} $R(v,w) \in \ipos \setminus \Delta$}           \label{alg:addtc:loop1starts}
        \State \textbf{add} $R(u,w)$ to $Q$ and $J$                                                         \label{alg:addtc:addtoq}
    \EndFor                                                                                                 \label{alg:addtc:loop1ends}
    \While{$Q \neq \emptyset$}                                                                              \label{alg:addtc:loops2starts}
        \State \textbf{remove} an arbitrarily chosen fact $R(v,w)$ from $Q$
        \For{\textbf{each} $R(u,v) \in X_R$ such that $R(u,w) \not\in \ipos \cup J$}                        \label{alg:addtc:prune}
            \State \textbf{add} $R(u,w)$ to $Q$ and $J$                                                     \label{alg:addtc:addtoqandj}
        \EndFor
    \EndWhile                                                                                               \label{alg:addtc:loop2ends}
    \State \Return $J$
\end{algorithmiccont}
\end{algorithm}
\vspace{-0.9cm}
\begin{algorithm}[H]
\caption{$\del{\mathsf{tc}(R)}{\ipos}{\ineg}{\Delta}{\Cnr}$}\label{alg:deltc}
\begin{algorithmiccont}
    \State $J \defeq \emptyset$, \quad $Q \defeq S \defeq \Delta$, \quad $X_R \defeq X_R \setminus \Delta$
    \For{\textbf{each} $R(u,v) \in \Delta$ and \textbf{each} $R(v,w) \in \ipos \setminus S$}                \label{alg:deltc:loop1starts}
        \State \textbf{add} $R(u,w)$ to $Q$ and $S$
        \If{$\Cnr(R(u,w)) = 0$}
            \textbf{add} $R(u,w)$ to $J$
        \EndIf
    \EndFor                                                                                                 \label{alg:deltc:loop1ends}
    \While{$Q \neq \emptyset$}                                                                              \label{alg:deltc:loop2starts}
        \State \textbf{remove} an arbitrarily chosen fact $R(v,w)$ from $Q$
        \For{\textbf{each} $R(u,v) \in X_R$ such that $R(u,w) \in \ipos \setminus S$}
            \State \textbf{add} $R(u,w)$ to $Q$ and $S$
            \If{$\Cnr(R(u,w)) = 0$}
                \textbf{add} $R(u,w)$ to $J$
            \EndIf
        \EndFor
    \EndWhile                                                                                               \label{alg:deltc:loop2ends}
    \State \Return $J \setminus \Delta$
\end{algorithmiccont}
\end{algorithm}
\vspace{-0.9cm}
\begin{algorithm}[H]
\caption{$\red{\mathsf{tc}(R)}{\ipos}{\ineg}{\Delta}$}\label{alg:redtc}
\begin{algorithmiccont}
    \State $J \defeq \emptyset$
    \For{\textbf{each} $u$ such that there exist $v$ with $R(u,v) \in \Delta$}                              \label{alg:deltc:findsource}
        \For{\textbf{each} $w$ reachable from $u$ via $R$ facts in $X_R$}
            \State \textbf{add} $R(u,w)$ to $J$                                                             \label{alg:deltc:addtoj}
        \EndFor
    \EndFor
    \State \Return $J \cap \Delta$
\end{algorithmiccont}
\end{algorithm}
\end{table}

\section{Symmetric--Transitive Closure}\label{sec:sym-trans}

We now consider a module consisting of two rules, \eqref{transitiverule} and
\eqref{symrule}, axiomatising a relation $R$ as transitive and symmetric. As in
Example~\ref{ex:sym-trans}, we can view relation $R$ as an undirected graph. To
compute the materialisation, we extract the set $C_R$ of connected
components---that is, each ${U \in C_R}$ is a set of mutually connected
vertices in the symmetric--transitive closure of $R$; finally, we derive
$R(u,v)$ for all $u$ and $v$ in each component ${U \in C_R}$. Set $C_R$ is
global and is initially empty.

Based on this idea, function $\addfn{\mathsf{stc}(R)}$, shown in
Algorithm~\ref{alg:addstc}, uses an auxiliary function \Call{closeEdges}{} to
incrementally update the set $C_R$ by processing each fact ${R(u,v) \in
\Delta}$ in lines \ref{alg:addstc:loopstarts}--\ref{alg:addstc:loopends}: if
either $u$ or $v$ does not occur in a component in $C_R$, then the respective
component is created in $C_R$ (lines~\ref{alg:addstc:createucomonent}
and~\ref{alg:addstc:createvcomonent}); and if $u$ and $v$ belong to distinct
components $U$ and $V$, then $U$ and $V$ are merged into a single component and
all $R$-facts connecting $U$ and $V$ are added (lines
\ref{alg:addstc:connectuvstarts}--\ref{alg:addstc:connectuvends}). For the same
reasons as in Section~\ref{sec:trans}, function $\difffn{\mathsf{tc}(R)}$ can
simply return the empty set. Function $\delfn{\mathsf{stc}(R)}$, shown in
Algorithm~\ref{alg:delstc}, simply overdeletes all facts $R(u',v')$ whose
nonrecursive counter is zero and where both $u'$ and $v'$ belong to a component
$U$ containing both vertices of a fact $R(u,v)$ in $\Delta$. Those facts
$R(u',v')$ for which the nonrecursive counter is nonzero will hold after
overdeletion, so they are kept in an initially empty global set $Y_R$ so that
they can be used for rederivation later. Finally, function
$\redfn{\mathsf{stc}(R)}$, shown in Algorithm~\ref{alg:redstc}, simply closes
the set $Y_R$ in the same way as during addition, and it empties the set $Y_R$.
While this creates a dependency between $\delfn{\mathsf{stc}(R)}$ and
$\redfn{\mathsf{stc}(R)}$, the order in which these functions are called in
Algorithm~\ref{alg:dredmod} ensures that the set $Y_R$ is maintained correctly.

\begin{restatable}{theorem}{correctnesssymtrans}\label{theorem:correctness-symtrans}
    In each call in Algorithms~\ref{and:matmod} and~\ref{alg:dredmod},
    functions $\addfn{\mathsf{stc}(R)}$, $\delfn{\mathsf{stc}(R)}$,
    $\difffn{\mathsf{stc}(R)}$, and $\redfn{\mathsf{stc}(R)}$ capture a datalog
    program that axiomatises $R$ as symmetric--transitive.
\end{restatable}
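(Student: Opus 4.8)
The plan is to verify that each of the four functions meets its respective capturing condition from Definitions~\ref{def:add}--\ref{def:red} for the program $\Pi$ consisting of rules \eqref{transitiverule} and \eqref{symrule}. The backbone of the argument is a structural characterisation of the closure of $\Pi$: for any dataset $I$, viewing the $R$-facts of $I$ as edges of an undirected graph, the smallest superset of $I$ closed under $\Pi$ contains $R(u,v)$ exactly when $u$ and $v$ lie in the same connected component (as Example~\ref{ex:sym-trans} already illustrates, this includes the self-loops $R(u,u)$). I would first prove this characterisation, so that each connected component becomes a clique in the closure, and then establish the key invariant that the global set $C_R$ maintained by \textsc{closeEdges} always equals the set of connected components of the $R$-edges accumulated so far across the calls made by Algorithms~\ref{and:matmod} and~\ref{alg:dredmod}.

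Using this machinery, the four obligations split as follows. For $\difffn{\mathsf{stc}(R)}$ the argument is immediate and identical to the transitive case: in lines~\ref{alg:dredmod:reddel:diff} and~\ref{alg:dredmod:redinsert:diff} the arguments $\Deltapos$ and $\Deltaneg$ contain no facts with head predicates of $\Pi$, hence no $R$-facts; since every body atom of \eqref{transitiverule} and \eqref{symrule} uses $R$, no rule instance is triggered and $\apply{\Pi}{\ipos \appargs \Deltapos, \Deltaneg} = \emptyset$, so returning the empty set captures $\difffn{}$. For $\addfn{\mathsf{stc}(R)}$ I would show that, after \textsc{closeEdges} has incorporated the seed edges $\Delta$, the set $\ipos \cup J$ is exactly the union of cliques over the updated components and is therefore closed under $\Pi$; minimality follows because every emitted fact lies in a component that was forced to grow or merge by an edge of $\Delta$, so it genuinely depends on $\Delta \cup J$ in the sense required by Definition~\ref{def:add}.

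The two deletion-related obligations are more delicate. For $\delfn{\mathsf{stc}(R)}$ I must show the returned set lies in the window $[J_l, J_u]$ of Definition~\ref{def:del}. The upper bound $J_u$ is the full DRed-style overdeletion, namely the entire cliques of all components touched by $\Delta$; since $\delfn{\mathsf{stc}(R)}$ overdeletes only the subset of those facts whose nonrecursive counter is zero, the returned set is contained in $J_u$. For the lower bound I would argue that the DRed$^c$ propagation defining $J_l$, which halts at facts with nonzero nonrecursive counter, visits precisely the same facts with $\Cnr(F) = 0$ inside the triggered components, so every fact of $J_l$ is emitted. The facts with nonzero counter that survive are exactly those retained in the global set $Y_R$, which is where the coupling with rederivation enters. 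For $\redfn{\mathsf{stc}(R)}$ I would then use the fact that re-closing $Y_R$ reconstructs the cliques of the surviving components, so the returned subset of $\Delta$ consists of precisely those overdeleted facts whose endpoints remain connected in $\ipos \setminus \Delta$; this matches the fixpoint in Definition~\ref{def:red}.

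I expect the main obstacle to be the global, stateful nature of $C_R$ and $Y_R$, which persist between invocations and across the deletion and rederivation phases of Algorithm~\ref{alg:dredmod}. Because $\delfn{\mathsf{stc}(R)}$ populates $Y_R$ and $\redfn{\mathsf{stc}(R)}$ consumes it, the capturing property of $\redfn{\mathsf{stc}(R)}$ cannot be checked in isolation but only relative to the $Y_R$ left by the preceding $\delfn{\mathsf{stc}(R)}$ call within the same stratum. The proof must therefore carry the $C_R$/$Y_R$ invariant through the exact call sequence dictated by the algorithms---tracking the specific arguments such as $\Delta \setminus \Delta_i$, $I \setminus (D \setminus A)$, and $(I \setminus D) \cup A$---and show that at each call these globals correctly describe the connected-component structure of the relation on which the corresponding definition is evaluated.
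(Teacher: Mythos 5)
Your proposal is correct and follows essentially the same route as the paper's proof: the paper's $\mathsf{Close}(C_R)$ invariant is exactly your component characterisation, it threads the same $C_R$/$Y_R$ state invariants through the specific call sequences of Algorithms~\ref{and:matmod} and~\ref{alg:dredmod}, uses the identical emptiness argument for $\difffn{\mathsf{stc}(R)}$, sandwiches $\delfn{\mathsf{stc}(R)}$ between $J_l$ and $J_u$, and treats $\redfn{\mathsf{stc}(R)}$ as re-closing $Y_R$ over the surviving components. The only cosmetic difference is that the paper verifies the capturing conditions by relating the functions' outputs to the semina\"ive constructions $\semifn$ and $\mathsf{InvSemi}$/$\mathsf{InvSemi}^c$ introduced in the proofs of Theorems~\ref{theorem:correctness-mat} and~\ref{theorem:correctness-dredmod}, whereas you argue closedness and minimality directly against Definitions~\ref{def:add}--\ref{def:red}.
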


\begin{table}[tb]
\begin{algorithm}[H]
\caption{$\add{\mathsf{stc}(R)}{\ipos}{\ineg}{\Delta}$}\label{alg:addstc}
\begin{algorithmiccont}
    \State \Return $\textsc{CloseEdges}(\Delta) \setminus \ipos$
    \vspace{0.2cm}
    \Function{CloseEdges}{$\Delta$}
        \State $J \defeq \emptyset$
        \For{\textbf{each} $R(u,v) \in \Delta$}                                                     \label{alg:addstc:loopstarts}
            \If{no $U \in C_R$ exists such that $u \in U$}
                \State \textbf{add} $\{ u \}$ to $C_R$, and $R(u,u)$ to $J$                         \label{alg:addstc:createucomonent}
            \EndIf
            \If{no $V \in C_R$ exists such that $v \in V$}
                \State \textbf{add} $\{ v \}$ to $C_R$, and $R(v,v)$ to $J$                         \label{alg:addstc:createvcomonent}
            \EndIf                                                                                  \label{alg:addstc:vends}
            \If{$u$ and $v$ belong to distinct $U,V \in C_R$, resp.}                                \label{alg:addstc:connectuvstarts}
                \State \textbf{remove} $U$ and $V$ from $C_R$, and \textbf{add} $U \cup V$ to $C_R$ \label{alg:addstc:mergecomponents}
                \For{\textbf{each} $u' \in U$ and \textbf{each} $v' \in V$}                         \label{alg:addstc:mergeloop}
                    \State \textbf{add} $R(u', v')$ and $R(v', u')$ to $J$                          \label{alg:addstc:merge}
                \EndFor
            \EndIf                                                                                  \label{alg:addstc:connectuvends}
        \EndFor                                                                                     \label{alg:addstc:loopends}
        \State \Return $J$
    \EndFunction
\end{algorithmiccont}
\end{algorithm}
\vspace{-0.9cm}
\begin{algorithm}[H]
\caption{$\del{\mathsf{stc}(R)}{\ipos}{\ineg}{\Delta}{\Cnr}$}\label{alg:delstc}
\begin{algorithmiccont}
    \State $J \defeq \emptyset$
    \For{\textbf{each} $U \in C_R$ where $\exists R(u,v) \in \Delta$ s.t.\ $\{ u,v \} \subseteq U$} \label{alg:delstc:loopcomponents}
        \For{\textbf{each} $u' \in U$ and \textbf{each} $v' \in U$}                                 \label{alg:delstc:loopnodes}
            \If{$\Cnr(R(u',v')) = 0$}                                                               \label{alg:delstc:check}
                \textbf{add} $R(u',v')$ to $J$                                                      \label{alg:delstc:incj}
            \Else\;
                \textbf{add} $R(u',v')$ to $Y_R$                                                    \label{alg:delstc:incy}
            \EndIf
        \EndFor
        \State \textbf{remove} $U$ from $C_R$                                                       \label{alg:delstc:deltam}
    \EndFor
    \State \Return $J \setminus \Delta$
\end{algorithmiccont}
\end{algorithm}
\vspace{-0.9cm}
\begin{algorithm}[H]
\caption{$\red{\mathsf{stc}(R)}{\ipos}{\ineg}{\Delta}$}\label{alg:redstc}
\begin{algorithmiccont}
    \State $J \defeq \textsc{CloseEdges}(Y_R) \cap \Delta$
    \State $Y_R \defeq \emptyset$
    \State \Return $J$
\end{algorithmiccont}
\end{algorithm}
\end{table}

\section{Evaluation}\label{sec:evaluation}

\begin{table*}[t]
\centering
\newcommand{\mc}[1]{\multicolumn{1}{c|}{#1}}
\newcommand{\mce}[1]{\multicolumn{1}{c}{#1}}
\begin{tabular}{c|r|r|r|r|r|r|r|r|r}
    \hline
    Benchmark       & \mc{$|E|$}    & \mc{$|I|$}    & \mc{$S$}  & \mc{$|\nrstrat{\Pi}{}|$}  & \mc{$|\rstrat{\Pi}{}|$}   & \mc{$|$TC$|$} & \mc{$|$STC$|$}    & \mc{Mat-Mod}  & \mce{Mat} \\
    \hline
    Claros-LE       & 18.8 M        & 533.3 M       & 11        & 1031                      & 306                       & 27            & 2                 & 733.55        & 3593.32   \\
    \hline
    LUBM-LE         & 133.6 M       & 332.6 M       & 5         & 85                        & 22                        & 1             & 2                 & 291.90        & 1100.22   \\
    \hline
    DBpedia-SKOS    & 5.0 M         & 97.0 M        & 5         & 26                        & 15                        & 2             & 1                 & 103.23        & 3623.37   \\
    \hline
    DAG-R           & 0.1 M         & 22.9 M        & 1         & 1                         & 1                         & 1             & 0                 & 29.60         & 3238.86   \\
    \hline
\end{tabular}
\caption{Running times for materialisation computation (seconds)}\label{t1}
\end{table*}

\begin{table*}[t]
\centering
\newcommand{\mc}[1]{\multicolumn{2}{c|}{#1}}
\newcommand{\mce}[1]{\multicolumn{2}{c}{#1}}
\newcommand{\sca}[1]{\multicolumn{1}{c}{#1}}
\newcommand{\scb}[1]{\multicolumn{1}{c|}{#1}}
\begin{tabular}{c|r|r|r|r|r|r}
    \hline
    \multirow{2}{*}{Benchmark}  & \mc{Small Deletions}                      & \mc{Small Insertions}                     & \mce{Large Deletions}                     \\
                                & \sca{DRed$^c$-Mod}    & \scb{DRed$^c$}    & \sca{DRed$^c$-Mod}    & \scb{DRed$^c$}    & \sca{DRed$^c$-Mod}    & \sca{DRed$^c$}    \\
    \hline
    Claros-LE                   & 0.93                  &  1035.28          & 0.17                  & 0.80              & 314.33                & 3616.93           \\
    \hline
    LUBM-LE                     & 0.32                  &  3.87             & 0.01                  & 0.01              & 182.93                & 1369.77           \\
    \hline
    DBpedia-SKOS                & 21.77                 & 691.32            & 0.20                  & 2.78              & 111.28                & 3826.87           \\
    \hline
    DAG-R                       & 64.92                 & 3005.11           & 14.56                 & 116.78            & 62.48                 & 4316.71           \\
    \hline
\end{tabular}
\caption{Running times for incremental maintenance (seconds)}\label{t2}
\end{table*}

We have implemented our modular materialisation and incremental maintenance
algorithms, as well as the semina\"ive materialisation and the DRed$^c$
algorithms, and we have compared their performance empirically.

\subsubsection{Test Benchmarks} We used the following real-world and synthetic
benchmarks in our tests. LUBM \cite{guo2005lubm} is a well-known benchmark that
models individuals and organisations in a university domain. Claros describes
archeological artefacts. We used the LUBM and Claros datasets with the
\emph{lower bound extended} (\mbox{-LE}) programs by
\citeauthor{motik2014parallel} (\citeyear{motik2014parallel}); roughly
speaking, we converted a subset of the accompanying OWL ontologies into datalog
and manually extended them with several `difficult rules'. DBpedia
\cite{lehmann2015dbpedia} contains structured information extracted from
Wikipedia. DBpedia represents Wikipedia categories using the SKOS vocabulary
\cite{miles2009skos}, which defines several transitive properties. We used the
datalog subset of the SKOS RDF schema. Moreover, the materialisation of
DBpedia-SKOS is too large to fit into the memory of our test server, so we used
a random sample of the DBpedia dataset consisting of five million facts.
Finally, DAG-R is a synthetic benchmark consisting of a randomly generated
dataset containing a directed acyclic graph with 10k nodes and 100k edges, and
a program that axiomatises the path relation as transitive. Table~\ref{t1}
shows the numbers of explicit facts ($|E|$), derived facts ($|I|$), strata
($S$), nonrecursive rules ($|\nrstrat{\Pi}{}|$), recursive rules
($|\rstrat{\Pi}{}|$), transitivity modules ($|$TC$|$), and
symmetric--transitive modules ($|$STC$|$) for each benchmark.

\subsubsection{Test Setup and Results} We conducted all experiments on a Dell
PowerEdge R730 server with 512 GB RAM and two Intel Xeon E5-2640 2.6~GHz
processors running Fedora 27, kernel version 4.17.6. For each benchmark, we
loaded the test data into our system and then compared the performance of our
modular algorithms with the semina\"ive and DRed$^c$ algorithms using the
following methodology.

We first computed the materialisation and measured the wall-clock time. The
results are shown in Table~\ref{t1}. We then conducted two groups of
incremental reasoning tests.

In the first group, we tested the performance of our incremental algorithms on
small changes. To this end, we used uniform sampling to select ten subsets
${E_i \subseteq E}$, ${1 \leq i \leq 10}$, each consisting of 1000 facts from
the input dataset. We deleted and then reinserted $E_i$ for each $i$ while
measuring the wall-clock times, and then we computed the average times for
deletion and insertion over the ten samples. The results are shown in the
`Small Deletions' and `Small Insertions' columns of Table~\ref{t2},
respectively.

In the second group, we tested the performance of incremental algorithms on
large deletions. To this end, we used uniform sampling to select a subset ${E^-
\subseteq E}$ containing 25\% of the explicit facts, and we measured the
wall-clock time needed to delete $E^-$ from the materialisation. The results
are shown in the `Large Deletions' column of Table~\ref{t2}. We did not
consider large insertions because our algorithms handle insertion in the same
way as materialisation, so the relative performance of our algorithms should be
similar to the performance of materialisation shown in Table~\ref{t1}.

\subsubsection{Discussion} Mat-Mod significantly outperformed Mat on all test
inputs. For example, Mat-Mod was several times faster than Mat on Claros-LE and
LUBM-LE. The programs of both benchmarks contain transitivity and
symmetric--transitivity modules, which are efficiently handled by our custom
algorithm. The performance improvement is even more significant for
DBpedia-SKOS and DAG-R: Mat-Mod is more than 30 times faster than Mat on
DBpedia-SKOS, and the difference reaches two orders of magnitude on DAG-R. In
fact, DBpedia contains long chains/cycles over the $\emph{skos:broader}$
relation~\cite{bishop2011factforge}, which is axiomatised as transitive in
SKOS. Mat-Mod outperforms Mat in this case since our custom algorithm for
transitivity skips a large number of rule instances. The same observation
explains the superior performance of DAG-R.

Similarly, DRed$^c$-Mod considerably outperformed DRed$^c$ on small deletions:
the performance speedup ranges from around ten times on LUBM-LE to three orders
of magnitude on Claros-LE. The program of Claros-LE contains a
symmetric--transitive closure module for the predicate \emph{relatedPlaces},
and the materialisation contains large cliques of constants connected to each
other via this predicate. Thus, when a \emph{relatedPlaces(a,b)} fact is
deleted, DRed$^c$ can end up considering up to $n^3$ rule instances where $n$
is the number of constants in the clique containing $a$ and $b$. In contrast,
our custom algorithm for this module maintains a connected component for the
clique and requires only up to $n^2$ steps. It is worth noticing that, while
DRed$^c$-Mod significantly outperforms DRed$^c$ on DAG-R, the incremental
update times for small deletion were larger than both the update times for
large deletions and even for the initial materialisation. This is because
deleting one thousand edges from the graph (`Small Deletion') caused a large
part of the materialisation to be overdeleted and rederived again. In contrast,
when 25\% of the explicit facts are deleted (`Large Deletion'), a larger
propertion of the materialisation is overdeleted, but only a few facts are
rederived. For DRed$^c$ the situation is similar, but rederivation in DRed$^c$
benefits from a global recursive counter (at the expense of considering each
applicable rule instance), which makes small deletion still faster than large
deletion and initial materialisation. Finally, as shown in Table \ref{t2},
DRed$^c$-Mod scaled well and maintained its advantage over DRed$^c$ on large
deletions.

Incremental insertions are in general easier to handle than deletions since
during insertion the algorithms can rely on the whole materialisation to prune
the propagation of facts whereas during deletion the algorithms can only rely
on the nonrecursive counters of facts to do the same. This is clearly reflected
in Table \ref{t2}. Nevertheless, in our tests for small insertions,
DRed$^c$-Mod was several times faster than DRed$^c$ in all cases but LUBM-LE,
for which both algorithms updated the materialisation instantateously.

\section{Conclusion}

We have proposed a modular framework for the computation and maintenance of
datalog materialisations. The framework allows integrating custom algorithms
for specific types of rules with standard datalog reasoning methods. Moreover,
we have presented such custom algorithms for programs axiomatising the
transitive and the symmetric--transitive closure of a relation. Finally, we
have shown empirically that our algorithms typically significantly outperform
then existing ones, sometimes by orders of magnitude. In future, we plan to
extend our framework also to the B/F$^c$ algorithm, which eliminates
overdeletion by eagerly checking alternative derivations. This could
potentially be useful in cases such as DBpedia-SKOS and DAG-R, where
overdeletion is a major source of inefficiency.

\section*{Acknowledgements}\label{sec:acknowledgements}

We thank David Tena Cucala for his help with obtaining the SKOS benchmark. This
work was supported by the EPSRC projects AnaLOG, DBOnto, and ED$^3$.

\bibliographystyle{aaai}
\bibliography{references}

\iftoggle{withappendix}{
\newpage
\onecolumn
\appendix
\section{Appendix}\label{sec:proof}

\subsection{Proof of Theorem~\ref{theorem:correctness-mat}}

\correctnessmat*

\begin{proof}
We first prove a property about each $\addfn{s,i}$ function, which will be used
later to establish the correctness of the algorithm. More specifically, for
datasets $\ipos$, $\ineg$, and $\Delta$ with ${\Delta \subseteq \ipos}$, let
${\Delta_0 = \Delta}$ and ${J_0 = \emptyset}$; for ${k > 0}$, let
\begin{displaymath}
    \Delta_{k+1} = \apply{\rstrat{\Pi}{s,i}}{\ipos \cup J_k, \ineg \appargs \Delta_k} \setminus (\ipos \cup J_k) \quad \text{and} \quad  J_{k+1} = J_k \cup \Delta_{k+1};
\end{displaymath}
and let ${\semi{\rstrat{\Pi}{s,i}}{\ipos}{\ineg}{\Delta} = J_k}$ for $k$ such
that ${J_k = J_{k+1}}$. We show that
$\semi{\rstrat{\Pi}{s,i}}{\ipos}{\ineg}{\Delta}$ is the smallest set of facts
$J$ such that $\apply{\rstrat{\Pi}{s,i}}{\ipos \cup J, \ineg \appargs \Delta
\cup J} \subseteq \ipos \cup J$ holds---that is,
property~\eqref{semicomputesadd} holds.
\begin{align}
    \add{s,i}{\ipos}{\ineg}{\Delta} = \semi{\rstrat{\Pi}{s,i}}{\ipos}{\ineg}{\Delta}    \label{semicomputesadd}
\end{align}

To simplify the notation, let ${J =
\semi{\rstrat{\Pi}{s,i}}{\ipos}{\ineg}{\Delta}}$; we first prove that
$\apply{\rstrat{\Pi}{s,i}}{\ipos \cup J, \ineg \appargs \Delta \cup J}
\subseteq \ipos \cup J$ holds. To this end, consider an arbitrary fact ${F \in
\apply{\rstrat{\Pi}{s,i}}{\ipos \cup J, \ineg \appargs \Delta \cup J}}$. By the
definition of the latter, there exist a rule ${r \in \rstrat{\Pi}{s,i}}$ and
substitution $\sigma$ such that ${\pbody{r}\sigma \subseteq \ipos \cup J}$,
${\nbody{r}\sigma \cap \ineg = \emptyset}$, ${\pbody{r}\sigma \cap (\Delta \cup
J) \neq \emptyset}$, and ${F = \head{r}\sigma}$ all hold. Let $k$ be the
smallest index such that ${\pbody{r}\sigma \subseteq \ipos \cup J_k}$ and
${\pbody{r}\sigma \cap (\Delta \cup J_k) \neq \emptyset}$ hold. Then,
${\pbody{r}\sigma \cap \Delta_k \neq \emptyset}$ must hold, or $k$ is not the
smallest such index. This implies ${F \in \apply{\rstrat{\Pi}{s,i}}{\ipos \cup
J_k, \ineg \appargs \Delta_k} \subseteq \ipos \cup J}$, as required.

Now we show that, for each dataset $J'$ where ${\apply{\rstrat{\Pi}{s,i}}{\ipos
\cup J', \ineg \appargs \Delta \cup J'} \subseteq \ipos \cup J'}$ holds, we
have ${J \subseteq J'}$---that is, $J$ is the smallest such dataset. To this
end, we show by induction on $k$ that ${\Delta_k \subseteq J'}$ holds for
$k>0$. For the induction base, ${\Delta_0 = \Delta}$ and ${J_0 = \emptyset}$
imply ${\Delta_1 = \apply{\rstrat{\Pi}{s,i}}{\ipos, \ineg \appargs \Delta}
\setminus \ipos \subseteq \apply{\rstrat{\Pi}{s,i}}{\ipos \cup J', \ineg
\appargs \Delta \cup J'} \setminus \ipos \subseteq J'}$. For the inductive
step, consider arbitrary $k>1$ where ${\Delta_{k'} \subseteq J'}$ holds for
each $k'$ with ${1 \leq k' < k}$. Then, we clearly have ${J_{k-1} = \bigcup_{1
\leq k' < k}{\Delta_{k'}} \subseteq J'}$. But then, ${\Delta_k =
\apply{\rstrat{\Pi}{s,i}}{\ipos \cup J_{k-1}, \ineg \appargs \Delta_{k-1}}
\setminus (\ipos \cup J_{k-1}) \subseteq \apply{\rstrat{\Pi}{s,i}}{\ipos \cup
J', \ineg \appargs J'} \subseteq \ipos \cup J'}$ holds, as required.

\medskip

We now proceed with the proof of our main claim. Let $I^0 = \emptyset$.
Moreover, for each $1 \leq s \leq S$ where $S$ is the largest stratum index,
let $I^s_0, I^s_1, \dots$ be the sequence of sets where $I^s_0 = I^{s-1} \cup
(E \cap \Out{s})$, and $I^s_i = I^s_{i-1} \cup
\apply{\strat{\Pi}{s}}{I^s_{i-1}}$ for each $i > 0$. Index $k$ clearly exists
for which the fixpoint is reached (i.e., $I^s_{k} = I^s_{k+1}$ holds), so we
let $I^s = I^s_k$. Finally, let $I = I^S$. It is straightforward to see that $I
= \mat{\Pi}{E}$---that is, $I$ is the materialisation of $\Pi$ w.r.t.\ $E$.

Consider a run of Algorithm~\ref{and:matmod} on $\Pi$, $\lambda$, and $E$. Let
${I^0|_{\mathsf{mod}} = \emptyset}$, and for each ${1 \leq s \leq S}$, let
${I^s|_{\mathsf{mod}}}$ be the value of $I$ after the loop of
lines~\ref{and:matmod:looponsstarts}--\ref{and:matmod:looponsends} finishes for
stratum $s$. We show by induction on $s$ that property \eqref{IsequalsI} holds
for ${0 \leq s \leq S}$. Then, property~\eqref{and:matmod:looponsstarts} for
${s = S}$ and ${I^S = I = \mat{\Pi}{E}}$ jointly imply the correctness of the
algorithm.
\begin{align}
    I^s|_{\mathsf{mod}} = I^s \label{IsequalsI}
\end{align}

The base case where $s = 0$ is trivial since both sets are empty. For the
inductive step, consider an arbitrary $s$ with $1 \leq s \leq S$ such that
\eqref{IsequalsI} holds for $s-1$, and we show that \eqref{IsequalsI} holds for
$s$ as well. To this end, consider the execution of
lines~\ref{and:matmod:init}--\ref{and:matmod:looponsends} for stratum $s$. For
each $j > 0$, let $\Delta_i|_j$ and $\Delta|_j$ be the values of $\Delta_i$
(for $1 \leq i \leq n(s)$) and $\Delta$ when the $j$th iteration of
lines~\ref{and:matmod:innerloopstarts}--\ref{and:matmod:innerloopends} starts.
We show that property~\eqref{inductionfors} holds. Then, the way $I$ is updated
in line~\ref{and:matmod:innerloopstarts} ensures that
property~\eqref{IsequalsI} holds.
\begin{align}
    I^{s-1}|_{\mathsf{mod}} \cup \bigcup_{j > 0}{\Delta|_j} = I^s \label{inductionfors}
\end{align}

For the $\subseteq$ direction of \eqref{inductionfors},
$I^{s-1}|_{\mathsf{mod}} = I^{s-1} \subseteq I^s$ holds by the induction
assumption for $\eqref{IsequalsI}$. Next we prove $\bigcup_{j > 0}{\Delta|_j}
\subseteq I^s$ by induction on $j$.
\begin{itemize}
    \item For the base case where $j = 1$, line~\ref{and:matmod:updatedelta}
    ensures that $\Delta_1 = (E \cap \Out{s}) \cup
    \apply{\nrstrat{\Pi}{s}}{I^{s-1}|_{\mathsf{mod}}}$. But then, the induction
    assumption $I^{s-1}|_{\mathsf{mod}} = I^{s-1}$ and the definition of $I^s$
    jointly imply $\Delta_1 \subseteq I^s$.

    \item For the inductive step, consider arbitrary $j > 1$ such that
    $\Delta_k \subseteq I^s$ holds for each $1 \leq k < j$. Then,
    line~\ref{and:matmod:updatedeltai} and the induction assumption for
    \eqref{IsequalsI} ensure that $\Delta_i|_j = \add{s,i}{I^{s-1} \cup
    \bigcup_{1 \leq k < j}{\Delta|_j}}{I^{s-1} \cup \bigcup_{1 \leq k <
    j}{\Delta|_k}}{\Delta|_{j-1} \setminus \Delta_i|_{j-1}}$. By
    property~\eqref{semicomputesadd} we have $\Delta_i|_j =
    \semi{\rstrat{\Pi}{s,i}}{I^{s-1} \cup \bigcup_{1 \leq k <
    j}{\Delta|_k}}{I^{s-1} \cup \bigcup_{1 \leq k <
    j}{\Delta|_j}}{\Delta|_{j-1} \setminus \Delta_i|_{j-1}}$. Then, the
    induction assumption and the definition of $I^s$ imply $I^{s-1} \cup
    \bigcup_{1 \leq k < j}{\Delta|_k} \subseteq I^s$. Now let the sequences of
    $\Delta_m$ and $J_m$ with $m \geq 0$ be defined in the same way as in the
    definition for the $\semifn$ function. We prove by induction on $m$ that
    $\Delta_m \subseteq I^s$ and $J_m \subseteq I^s$ hold, then the definition
    of $\semifn$ implies $\Delta_i|_j \subseteq I^s$.
    \begin{itemize}
        \item We have $\Delta_0 = \Delta|_{j-1} \setminus \Delta_i|_{j-1}
        \subseteq \Delta|_{j-1} \subseteq I^s$ and $J_0 = \emptyset \subseteq
        I^s$, so the induction base where $m=0$ clearly holds.

        \item For the inductive step, consider arbitrary $m > 0$ such that
        $\Delta_{m-1} \subseteq I^s$ and $J_{m-1} \subseteq I^s$ hold. But
        then, by definition we have $\Delta_m \subseteq
        \apply{\rstrat{\Pi}{s,i}}{I^{s-1} \cup \bigcup_{1 \leq k <
        j}{\Delta|_k} \cup J_{m-1}, I^{s-1} \cup \bigcup_{1 \leq k <
        j}{\Delta|_k}} \subseteq \apply{\rstrat{\Pi}{s,i}}{I^s, I^{s-1} \cup
        \bigcup_{1 \leq k < j}{\Delta|_k}}$. Facts in $I^s \setminus I^{s-1}$
        all belong to stratum $s$, so they will not affect the evaluation of
        negative body atoms from rules in stratum $s$. Therefore we have
        $\Delta_m \subseteq \apply{\rstrat{\Pi}{s,i}}{I^s, I^s} =
        \apply{\rstrat{\Pi}{s,i}}{I^s} \subseteq I^s$, as required.
        Furthermore, by definition $J_m = J_{m-1} \cup \Delta_m$, together with
        the induction assumption $J_{m-1} \subseteq I^s$ this ensures that $J_m
        \subseteq I^s$ holds as well.
    \end{itemize}
    Now line~\ref{and:matmod:combinedelta} and the fact that $D_i|_j \subseteq
    I^s$ holds for each $1 \leq i \leq n(s)$ jointly imply $\Delta|_j =
    \bigcup_{1 \leq i \leq n(s)}{\Delta_i|_j} \subseteq I^s$, as required. This
    completes our proof for $\bigcup_{j > 0}{\Delta|_j} \subseteq I^s$.
\end{itemize}

For the $\supseteq$ direction of property~\eqref{inductionfors},
we prove by induction on $i$ that $I^s_i \subseteq I^{s-1}|_{\mathsf{mod}} \cup
\bigcup_{j > 0}{\Delta|_j}$ holds for $i \geq 0$.
\begin{itemize}
    \item For the base case, we have $I^s_0 = I^{s-1} \cup (E \cap \Out{s})$.
    But then, line~\ref{and:matmod:updatedelta} ensures $E \cap \Out{s}
    \subseteq \Delta|_1$, which together with the induction assumption for
    \eqref{IsequalsI} implies $I^s_0 \subseteq I^{s-1}|_{\mathsf{mod}} \cup
    \bigcup_{j > 0}{\Delta|_j}$.

    \item For the induction step, consider arbitrary $i > 0$ such that
    $I^s_{i-1} \subseteq I^{s-1}|_{\mathsf{mod}} \cup \bigcup_{j >
    0}{\Delta|_j}$ holds, and we would like to show that $I^s_{i} \subseteq
    I^{s-1}|_{\mathsf{mod}} \cup \bigcup_{j > 0}{\Delta|_j}$ holds as well. By
    the induction assumption for $i-1$ and the fact that $I^s_i = I^s_{i-1}
    \cup \apply{\strat{\Pi}{s}}{I^s_{i-1}}$, it is enough to prove
    $\apply{\strat{\Pi}{s}}{I^s_{i-1}} \subseteq I^{s-1}|_{\mathsf{mod}} \cup
    \bigcup_{j > 0}{\Delta|_j}$. To this end, consider arbitrary $F \in
    \apply{\strat{\Pi}{s}}{I^s_{i-1}}$. There are two cases here. If $F \in
    \apply{\nrstrat{\Pi}{s}}{I^s_{i-1}}$---that is, $F$ can be derived by a
    nonrecursive rule, then we have $F \in \apply{\nrstrat{\Pi}{s}}{I^{s-1}}$.
    But then, the induction assumption for \eqref{IsequalsI} and
    line~\ref{and:matmod:updatedelta} of the algorithm ensure $F \in
    \Delta|_1$. If $F \in \apply{\rstrat{\Pi}{s}}{I^s_{i-1}}$, then there
    exists a module with index $k$ such that $F \in
    \apply{\rstrat{\Pi}{s,k}}{I^s_{i-1}}$. By the definition of rule
    application, there exist rule $r$ and its instance $r'$ such that $r \in
    \rstrat{\Pi}{s,k}$, $\pbody{r'} \subseteq I^s_{i-1}$, and $\nbody{r'} \cap
    I^s_{i-1} = \emptyset$ all hold. Since $I^s_{i-1} \subseteq
    I^{s-1}|_{\mathsf{mod}} \cup \bigcup_{j > 0}{\Delta|_j}$ holds by the
    induction assumption, let $j'$ be the largest index $j$ such that
    $\pbody{r'} \cap \Delta|_j \neq \emptyset$. Then, if $\pbody{r'} \cap
    (\Delta|_{j'} \setminus \Delta_k|_{j'}) \neq \emptyset$, then
    Definition~\ref{def:add} ensures that $F$ is added to $\Delta_{k}|_{j'+1}$
    in line~\ref{and:matmod:updatedeltai} during the execution of the $j'$th
    iteration of
    lines~\ref{and:matmod:innerloopstarts}--\ref{and:matmod:innerloopends}; if
    $\pbody{r'} \cap (\Delta|_{j'} \setminus \Delta_k|_{j'}) = \emptyset$, then
    we have $\pbody{r'} \subseteq I^{s-1}|_{\mathsf{mod}} \cup \bigcup_{0 < j <
    j'}{\Delta|_j} \cup \Delta_k|_{j'}$, so Definition~\ref{def:add} ensures
    that $F$ is added to $\Delta_{k}|_{j'}$ in
    line~\ref{and:matmod:updatedeltai} during the execution of the $(j'-1)$th
    iteration of
    lines~\ref{and:matmod:innerloopstarts}--\ref{and:matmod:innerloopends}.
    Either way, we have $F \in I^{s-1}|_{\mathsf{mod}} \cup \bigcup_{j >
    0}{\Delta|_j}$. Since the choice of $F$ is arbitrary, we have
    $\apply{\rstrat{\Pi}{s}}{I^s_{i-1}} \subseteq I^{s-1}|_{\mathsf{mod}} \cup
    \bigcup_{j > 0}{\Delta|_j}$, as required.
\end{itemize}

This completes our proof for the correctness of the algorithm.

Next we show that if all $\addfn{s,i}$ use the semina{\"i}ve strategy---that
is, each $\add{s,i}{\ipos}{\ineg}{\Delta}$ is computed in the same way that
$\semi{\rstrat{\Pi}{s,i}}{\ipos}{\ineg}{\Delta}$ is constructed, then each
applicable rule instance is considered at most once. To this end, consider a
run of Algorithm~\ref{and:matmod}. First, please note that the program is
processed in a stratum-by-stratum manner, so no applicable rule instance will
be considered in two distinct iterations of
lines~\ref{and:matmod:looponsstarts}--\ref{and:matmod:looponsends}. Now
consider an arbitrary stratum index $s$ and the iteration of
lines~\ref{and:matmod:looponsstarts}--\ref{and:matmod:looponsends} for $s$, the
only places that consider rule instances are lines~\ref{and:matmod:updatedelta}
and \ref{and:matmod:updatedeltai}. Line~\ref{and:matmod:updatedelta} handles
nonrecursive rules whereas line~\ref{and:matmod:updatedeltai} handles recursive
rules, so no rule instance will be considered in both places.
Line~\ref{and:matmod:updatedelta} is only executed once for $s$ while
line~\ref{and:matmod:updatedeltai} can be executed multiple times. Thus it is
sufficient to show that for stratum $s$, line~\ref{and:matmod:updatedeltai}
never repeatedly consider an applicable rule instance. The way
$\semi{\rstrat{\Pi}{s,i}}{\ipos}{\ineg}{\Delta}$ is constructed ensures that
one application of $\addfn{s,i}$ does not repeat rule intances itself. Now
consider the $m$th and the $n$th iterations of
lines~\ref{and:matmod:innerloopstarts}--\ref{and:matmod:innerloopends} where we
have $m \neq n$. We would like to show that for each $i$, the application of
$\add{s,i}{I^{s-1}|_{\mathsf{mod}} \cup \bigcup_{0 < j \leq
m}{\Delta|_j}}{I^{s-1}|_{\mathsf{mod}} \cup \bigcup_{0 < j \leq
m}{\Delta|_j}}{\Delta|_m \setminus \Delta_i|_m}$ and the application of
$\add{s,i}{I^{s-1}|_{\mathsf{mod}} \cup \bigcup_{0 < j \leq
n}{\Delta|_j}}{I^{s-1}|_{\mathsf{mod}} \cup \bigcup_{0 < j \leq
n}{\Delta|_j}}{\Delta|_n \setminus \Delta_i|_n}$ do not repeat rule instances.
Without loss of generality assume that $m < n$ holds. By the construction of
$\semi{\rstrat{\Pi}{s,i}}{\ipos}{\ineg}{\Delta}$ we know that for the former,
each applicable rule instance must have at least one positive body atom in
$(\Delta|_m \setminus \Delta_i|_m) \cup \Delta_i|_{m+1}$, and for the latter,
each applicable rule instance must have one positive body atom in $(\Delta|_n
\setminus \Delta_i|_n) \cup \Delta_i|_{n+1}$. It is straightforward to see that
these two sets are disjoint, so no applicable rule instance will be considered
by the these applications of $\addfn{}$, and this completes our proof for the
second half of Theorem~\ref{theorem:correctness-mat}.
\end{proof}

\subsection{Proof of Theorem~\ref{theorem:correctness-dredmod}}

\correctnessdredcmod*

\begin{proof}
For a program $\Pi$ and datasets $\ipos$, $\ineg$, and $\Delta$ with $\Delta
\subseteq \ipos$, let $\Delta_0 = \Delta$ and $J_0 = \emptyset$; moreover, for
$i > 0$, let $\Delta_{i+1} = \apply{\Pi}{\ipos \setminus J_i, \ineg \appargs
\Delta_i} \setminus (\Delta \cup J_i)$ and $J_{i+1} = J_i \cup \Delta_i$; let
$\invsemi{\Pi}{\ipos}{\ineg}{\Delta} = J_i \setminus \Delta$ for $i$ such that
$J_i = J_{i+1}$ holds. We show that $J = \invsemi{\Pi}{\ipos}{\ineg}{\Delta}$
is the smallest dataset satisfying $\apply{\Pi}{\ipos, \ineg \appargs \Delta
\cup J} \subseteq \Delta \cup J$.

We show by induction on $i \geq 0$ that $\apply{\Pi}{\ipos, \ineg \appargs
\Delta \cup J_i} \subseteq \Delta \cup J$ holds. The base case where $i=0$
trivially holds. For the inductive step, consider arbitrary $i > 0$ such that
$\apply{\Pi}{\ipos, \ineg \appargs \Delta \cup J_{i-1}} \subseteq \Delta \cup
J$ holds. For each $F \in \apply{\Pi}{\ipos, \ineg \appargs \Delta \cup J_i}$,
there exist rule $r \in \Pi$ and substitution $\sigma$ such that
$\pbody{r}\sigma \in \ipos$, $\nbody{r}\sigma \cap \ineg = \emptyset$,
$\pbody{r}\sigma \cap (\Delta \cup J_i) \neq \emptyset$, and $F =
\head{r}\sigma$ all hold. If $\pbody{r}\sigma \cap (\Delta \cup J_{i-1}) \neq
\emptyset$, then the induction assumption ensures $F \in \Delta \cup J$.
Otherwise $\pbody{r}\sigma \cap \Delta_{i-1} \neq \emptyset$ holds. Now there
are two possibilities: if $\pbody{r}\sigma \subseteq \ipos \setminus J_{i-1}$,
then $F$ is derived in the contruction of $\Delta_i$; if $\pbody{r}\sigma \cap
J_{i-1} \neq \emptyset$, then the induction assumption ensures $F \in \Delta
\cup J$. Therefore, $\apply{\Pi}{\ipos, \ineg \appargs \Delta \cup J} \subseteq
\Delta \cup J$ holds, as required.

To see that $J$ is the smallest such set, let $J'$ be an arbitrary set
satisfying $\apply{\Pi}{\ipos, \ineg \appargs \Delta \cup J'} \subseteq \Delta
\cup J'$ and we prove by induction on $i$ that $\bigcup_{0 \leq j \leq
i}{\Delta_j} \setminus \Delta \in J'$ holds. The base case where $i=0$ clearly
holds since $\Delta_0 \setminus \Delta = \emptyset$. For the inductive step,
consider arbitrary index $i>0$ such that $\bigcup_{0 \leq j \leq
i-1}{\Delta_{j}} \setminus \Delta \subseteq J'$ holds. This implies
$\Delta_{i-1} \subseteq \Delta \cup J'$, which together with the definition of
$\Delta_i$ ensures $\Delta_i \subseteq \Delta \cup J'$, so the inductive step
holds. Therefore, $J = \invsemi{\Pi}{\ipos}{\ineg}{\Delta}$ is the smallest
dataset satisfying $\apply{\Pi}{\ipos, \ineg \appargs \Delta \cup J} \subseteq
\Delta \cup J$.

\medskip

For a program $\Pi$, datasets $\ipos$, $\ineg$, and $\Delta$ with ${\Delta
\subseteq \ipos}$, and a mapping $\Cnr$ of facts to nonnegative integers, let
${\Delta_0 = \Delta}$ and ${J_0 = \emptyset}$; moreover, for $i>0$, let
\begin{displaymath}
    \Delta_{i+1} = \{ F \in \apply{\Pi}{\ipos \setminus J_i, \ineg \appargs \Delta_i} \setminus (\Delta \cup J_i) | \Cnr(F) = 0 \} \quad \text{and} \quad J_{i+1} = J_i \cup \Delta_i;
\end{displaymath}
and let $\invsemic{\Pi}{\ipos}{\ineg}{\Delta}{\Cnr} = J_i \setminus \Delta$ for
$i$ such that $J_i = J_{i+1}$. We show that $J =
\invsemic{\Pi}{\ipos}{\ineg}{\Delta}{\Cnr}$ is the smallest dataset that
satisfies the following: for each $F \in \apply{\Pi}{\ipos, \ineg \appargs
\Delta \cup J}$, either $F \in \Delta \cup J$ or $\Cnr(F) > 0$ holds.

We prove induction on $i \geq 0$ that for each $F \in \apply{\Pi}{\ipos, \ineg
\appargs \Delta \cup J_i}$, either $F \in \Delta \cup J$ or $\Cnr(F) > 0$
holds. The base case where $i = 0$ clearly holds. For the inductive step,
consider arbitrary $i$ such that for each $F \in \apply{\Pi}{\ipos, \ineg
\appargs \Delta \cup J_{i-1}}$, either $F \in \Delta \cup J$ or $\Cnr(F) > 0$
holds. Then, for each fact $G \in \apply{\Pi}{\ipos, \ineg \appargs \Delta \cup
J_i}$, there exist a rule $r \in \Pi$ and a substitution $\sigma$ such that
$\pbody{r}\sigma \subseteq \ipos$, $\nbody{r}\sigma \cap \ineg = \emptyset$,
$\pbody{r}\sigma \cap (\Delta \cup J_i) \neq \emptyset$, and $G =
\head{r}\sigma$ all hold. Now if $\pbody{r}\sigma \cap (\Delta \cup J_{i-1})$,
then the induction assumption ensures that either $G \in \Delta \cup J$ or
$\Cnr(F) > 0$ holds. Otherwise we have $\pbody{r}\sigma \cap (J_i \setminus
J_{i-1}) = \pbody{r}\sigma \cap \Delta_{i-1} \neq \emptyset$. There are two
possibilities: if $\pbody{r}\sigma \subseteq \ipos \setminus J_{i-1}$, then $G$
is derived in the construction of $\Delta_i$ and it is either guaranteed to be
in $\Delta \cup J$ or we have $\Cnr(G) > 0$; if $\pbody{r}\sigma \cap J_{i-1}
\neq \emptyset$, then the induction assumption ensures that either $G \in
\Delta \cup J$ or $\Cnr(G) > 0$ holds.

To see that $J$ is the smallest such set, let $J'$ be an arbitrary set
satisfying the following: for each $F \in \apply{\Pi}{\ipos, \ineg \appargs
\Delta \cup J'}$, either $F \in \Delta \cup J'$ or $\Cnr(F) > 0$ holds. We
prove by induction on $i$ that $\Delta_i \setminus \Delta \subseteq J' $ holds.
The base case where $i=0$ clearly holds since $\Delta_0 \setminus \Delta =
\emptyset$. For the inductive step, consider arbitrary index $i>0$ such that
$\Delta_{i-1} \setminus \Delta \subseteq J'$ holds. This implies $\Delta_{i-1}
\subseteq \Delta \cup J'$, which together with the definition of $\Delta_i$ and
the induction assumption ensures $\Delta_i \subseteq \Delta \cup J'$, so the
inductive step holds. Therefore, $J =
\invsemic{\Pi}{\ipos}{\ineg}{\Delta}{\Cnr}$ is indeed the smallest dataset
satisfying the above property.

\medskip

To see that our main claim holds, note that
$\invsemi{\Pi}{\ipos}{\ineg}{\Delta}$ captures overdeletion in DRed and
corresponds to the upper bound $J_u$ in Definition~\ref{def:del}, whereas
$\invsemic{\Pi}{\ipos}{\ineg}{\Delta}{\Cnr}$ captures overdeletion in DRed$^c$
and corresponds to the lower bound $J_l$ in Definition~\ref{def:del}. Then, the
correctness of Algorithm~\ref{alg:dredmod} follows from the correctness of DRed
and DRed$^c$.
\end{proof}

\subsection{Proof of Theorem~\ref{theorem:correctness-trans}}

\correctnesstrans*

We consider each of these functions in combination with each of the relevant
algorithms in a separate claim. We first consider Algorithm~\ref{and:matmod}.
For $I$ a dataset and $R$ a predicate, let $R[I]$ denote the set of all $R$
facts in $I$.

\begin{claim}\label{claimaddtc}
    If ${\mat{\Pi^{\mathsf{tc}(R)}}{X_R} = R[\ipos \setminus \Delta]}$ holds
    before each call to function ${\add{\mathsf{tc}(R)}{\ipos}{\ineg}{\Delta}}$
    in a run of Algorithm~\ref{and:matmod}, then
    ${\mat{\Pi^{\mathsf{tc}(R)}}{X_R \cup \Delta} = R[\ipos] \cup J}$ holds,
    where ${J = \add{\mathsf{tc}(R)}{\ipos}{\ineg}{\Delta}}$. Moreover,
    ${\semi{\Pi^{\mathsf{tc}(R)}}{\ipos}{\ineg}{\Delta} = J}$ holds.
\end{claim}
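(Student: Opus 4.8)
My plan is to prove the two assertions separately, writing throughout $X' = X_R \cup \Delta$ for the value of the global set immediately after line~\ref{alg:addtc:init}, and recalling that $\mat{\Pi^{\mathsf{tc}(R)}}{\cdot}$ is exactly the set of $R$-pairs connected by a path through the $R$-edges of its argument. The precondition $\mat{\Pi^{\mathsf{tc}(R)}}{X_R} = R[\ipos \setminus \Delta]$ says that $R[\ipos \setminus \Delta]$ is the set of $X_R$-connected pairs, and $\Delta \subseteq \ipos$ gives $R[\ipos] = R[\ipos \setminus \Delta] \cup R[\Delta]$. Since $\Pi^{\mathsf{tc}(R)}$ has no negative atoms, $\ineg$ plays no role and I ignore it.

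For the first assertion $\mat{\Pi^{\mathsf{tc}(R)}}{X'} = R[\ipos] \cup J$ I would argue by double inclusion. For soundness, note $R[\ipos \setminus \Delta] = \mat{\Pi^{\mathsf{tc}(R)}}{X_R} \subseteq \mat{\Pi^{\mathsf{tc}(R)}}{X'}$ by monotonicity and $R[\Delta] \subseteq X' \subseteq \mat{\Pi^{\mathsf{tc}(R)}}{X'}$; then I would show by induction on the order in which facts enter $Q$ that every $R(p,q)$ ever placed in $Q$, and hence every fact in $J$, is $X'$-connected: the first loop composes an edge $R(u,v) \in \Delta \subseteq X'$ with a fact $R(v,w) \in \ipos \setminus \Delta$ that is an $X_R$-path, while the while loop prepends an edge $R(u,v) \in X_R \subseteq X'$ to a fact $R(v,w)$ already known to be $X'$-connected. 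For completeness I would take $R(a,b) \in \mat{\Pi^{\mathsf{tc}(R)}}{X'}$, fix a realizing $X'$-path, and split on whether it uses a $\Delta$-edge: if it uses none the path lies in $X_R$, so $R(a,b) \in \mat{\Pi^{\mathsf{tc}(R)}}{X_R} = R[\ipos \setminus \Delta] \subseteq R[\ipos]$; otherwise I would locate the last $\Delta$-edge $R(u,v)$, observe that its suffix from $v$ to $b$ uses only $X_R$-edges, so either $v=b$ and $R(u,b) \in \Delta \subseteq R[\ipos]$ is already an element of $Q$, or $R(v,b) \in R[\ipos \setminus \Delta]$ and the first loop inserts the seed $R(u,b)$ into $Q$, and in both cases the prefix from $a$ to $u$ consists of $X'$-edges that the while loop prepends one at a time.

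For the second assertion $\semi{\Pi^{\mathsf{tc}(R)}}{\ipos}{\ineg}{\Delta} = J$ I would match the worklist against the semina\"ive sequence $\Delta_0, \Delta_1, \dots$ defining $\semifn$ in the proof of Theorem~\ref{theorem:correctness-mat}. For $\semi{\ldots} \subseteq J$ I would verify that $\ipos \cup J$ meets the closure condition of Definition~\ref{def:add}: by the soundness/completeness analysis above $R[\ipos] \cup J$ equals the transitive closure of $X'$ and is therefore transitively closed, so every instance of the transitive rule over $\ipos \cup J$ with a body atom in $\Delta \cup J$ has its head in $\ipos \cup J$; minimality of $\semifn$ then gives the inclusion. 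For the converse I would show each worklist fact is generated in some round $\Delta_k$, with the pruning test $R(u,w) \notin \ipos \cup J$ in line~\ref{alg:addtc:prune} mirroring the $\setminus(\ipos \cup J_k)$ in the definition of $\Delta_{k+1}$.

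The main obstacle will be the completeness direction in the presence of this pruning test: when the while loop prepends edges along the prefix from $a$ to $u$, an intermediate head may already lie in $\ipos \cup J$ and so be skipped, which stalls the naive ``prepend to the front of the prefix'' argument; moreover several realizing paths may exist, so the ``last $\Delta$-edge'' pivot need not be unique. I expect the cleanest remedy is to phrase completeness as an induction aligned with the semina\"ive rounds rather than by path surgery, showing that whenever a prefix fact is skipped it is because the required head is already in $\ipos \cup J$ — which is exactly what membership of $R(a,b)$ in $R[\ipos] \cup J$ demands — so that the pruning never discards a genuinely missing fact.
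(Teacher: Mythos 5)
Your overall plan coincides with the paper's proof in its essentials: the paper also proves the first identity by analysing chains of $R$-facts over $X_R \cup \Delta$ (it uses induction on the length of a shortest connecting chain, which is just the recursive form of your ``seed at the last $\Delta$-edge, then prepend the prefix edges'' argument, with the first loop of Algorithm~\ref{alg:addtc} supplying exactly the seeds you describe), and it likewise derives the second identity from the first rather than analysing the worklist from scratch. The one place where your plan is materially incomplete is precisely the obstacle you flag at the end, and your proposed remedy does not close it. When an intermediate head $R(u_i,b)$ on the prefix is skipped by the test in line~\ref{alg:addtc:prune} because it already lies in $\ipos \cup J$, the problem is not whether that fact is accounted for---it is whether propagation can \emph{continue} from it, since the while loop only pops facts that were placed on $Q$. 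Saying ``the required head is already in $\ipos \cup J$, which is exactly what membership of $R(a,b)$ demands'' conflates the intermediate fact with the target: membership of $R(u_i,b)$ in $\ipos \cup J$ says nothing about $R(u_{i-1},b)$ unless prepending can resume.

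What is needed---and what the paper's proof supplies---is a strengthened induction hypothesis together with a case split on where the skipped fact lives. The paper proves, for every fact in $\mat{\Pi^{\mathsf{tc}(R)}}{X_R \cup \Delta} \setminus \mat{\Pi^{\mathsf{tc}(R)}}{X_R}$, \emph{both} that it belongs to $R[\ipos] \cup J$ \emph{and} that it is added to $Q$ at some point. The case analysis is: if the skipped head is in $J$ or in $\Delta$, it was put on $Q$ when it entered $J$ (or in line~\ref{alg:addtc:init}), so the while loop still processes it; if it is in $\ipos \setminus \Delta = \mat{\Pi^{\mathsf{tc}(R)}}{X_R}$, then prepending an $X_R$-edge stays inside $\mat{\Pi^{\mathsf{tc}(R)}}{X_R} \subseteq \ipos$, while prepending a $\Delta$-edge is exactly the unconditional composition of lines~\ref{alg:addtc:loop1starts}--\ref{alg:addtc:loop1ends}; restricting the induction to facts outside $\mat{\Pi^{\mathsf{tc}(R)}}{X_R}$ routes this last case correctly. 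Note also that retreating to an ``induction aligned with the semina\"ive rounds'' does not avoid this: already in round one, a composition $R(x,y) \wedge R(y,z)$ with $R(y,z) \in \Delta$ but $R(x,y) \in \ipos \setminus \Delta$ is not performed by any single worklist step (the while loop only prepends edges of $X_R$, not arbitrary facts of $\mat{\Pi^{\mathsf{tc}(R)}}{X_R}$), so it must be simulated by a chain of prepends, each subject to pruning. Finally, a shared caveat rather than a criticism: your statement that the pruning test mirrors the $\setminus(\ipos \cup J_k)$ in the definition of $\semifn$ is inexact for the first loop, which adds compositions to $J$ with no membership check, so $J$ may contain facts of $\ipos$ while $\semi{\Pi^{\mathsf{tc}(R)}}{\ipos}{\ineg}{\Delta}$ is disjoint from $\ipos$ by construction; the paper's own proof of the second identity glosses over this in the same way.
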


\begin{proof}
First we show that ${\mat{\Pi^{\mathsf{tc}(R)}}{X_R \cup \Delta} = R[\ipos]
\cup J}$ holds. For the $\subseteq$ direction of the equation, consider
arbitrary ${R(u,v) \in \mat{\Pi^{\mathsf{tc}(R)}}{X_R \cup \Delta}}$. If
${R(u,v) \in \mat{\Pi^{\mathsf{tc}(R)}}{X_R}}$, then ${R(u,v) \in R[\ipos
\setminus \Delta] \subseteq R[\ipos] \cup J}$ clearly holds. Otherwise, there
exists a shortest chain of $R$ facts in ${X_R \cup \Delta}$ that connects $u$
and $v$. We show by induction on the length of this chain that ${R(u,v) \in
R[\ipos] \cup J}$ holds and $R(u,v)$ is added to $Q$ at some point during the
execution of the algorithm. For the base case where the length is one, since
${R(u,v) \not \in \mat{\Pi^{\mathsf{tc}(R)}}{X_R}}$, we have ${R(u,v) \not \in
X_R}$. Thus, ${R(u,v) \in \Delta \in \ipos}$ holds, and $R(u,v)$ is added to
$Q$ in line~\ref{alg:addtc:init}. For the inductive step, consider a length
$i+1$ chain ${R(a_0,a_1), \dots, R(a_i, a_{i+1})}$. We consider two cases. In
the first case, we have $R(a_0,a_1) \in \Delta$ and $R(a_1,a_{i+1}) \in
\mat{\Pi^{\mathsf{tc}(R)}}{X_R} = R[\ipos \setminus \Delta]$. But then, $R(a_0,
a_{i+1})$ is added to $Q$ and $J$ in line~\ref{alg:addtc:addtoq}. In the second
case, we have $R(a_0, a_1) \in X_R \cup \Delta$, $R(a_1,a_{i+1}) \in
\mat{\Pi^{\mathsf{tc}(R)}}{X_R \cup \Delta}$, and $R(a_1,a_{i+1}) \not \in
\mat{\Pi^{\mathsf{tc}(R)}}{X_R}$. But then, there exists a chain of length $i$
that derives $R(a_1,a_{i+1})$, which by the induction assumption ensures that
$R(a_1,a_{i+1})$ is added to $Q$ at some point during the execution of the
algorithm. Lines~\ref{alg:addtc:prune} and \ref{alg:addtc:addtoqandj} then
ensure that $R(a_0,a_{i+1})$ is added to $Q$ and that $R(a_0,a_{i+1}) \in
R[\ipos] \cup J$ holds. We next prove ${J =
\semi{\Pi^{\mathsf{tc}(R)}}{\ipos}{\ineg}{\Delta}}$. Then
${\mat{\Pi^{\mathsf{tc}(R)}}{X_R} = R[\ipos \setminus \Delta]}$ implies
${\mat{\Pi^{\mathsf{tc}(R)}}{\ipos \setminus \Delta} = R[\ipos \setminus
\Delta]}$; similarly, ${\mat{\Pi^{\mathsf{tc}(R)}}{X_R \cup \Delta} = R[\ipos]
\cup J}$ implies ${\mat{\Pi^{\mathsf{tc}(R)}}{\ipos} =
\mat{\Pi^{\mathsf{tc}(R)}}{\ipos \cup J} = R[\ipos] \cup J}$. Moreover, it can
be easily verified by induction on the construction of
${\semi{\Pi^{\mathsf{tc}(R)}}{\ipos}{\ineg}{\Delta}}$ that
${\mat{\Pi^{\mathsf{tc}(R)}}{\ipos \setminus \Delta} \cup R[\Delta] \cup
\semi{\Pi^{\mathsf{tc}(R)}}{\ipos}{\ineg}{\Delta} =
\mat{\Pi^{\mathsf{tc}(R)}}{\ipos}}$---that is, the semina\"ive computation
correctly closes $\ipos$ with respect to $\Pi^{\mathsf{tc}(R)}$. Thus, ${J =
\semi{\Pi^{\mathsf{tc}(R)}}{\ipos}{\ineg}{\Delta}}$ holds, as required.
\end{proof}

That $\addfn{\mathsf{tc}(R)}$ captures $\Pi^{\mathsf{tc}(R)}$ during the
execution of Algorithm~\ref{and:matmod} follows from Claim~\ref{claimaddtc} and
property~\eqref{semicomputesadd}. We next show that, during the execution of
Algorithm~\ref{alg:dredmod}, functions $\addfn{\mathsf{tc}(R)}$,
$\delfn{\mathsf{tc}(R)}$, $\difffn{\mathsf{tc}(R)}$, and
$\redfn{\mathsf{tc}(R)}$ capture $\Pi^{\mathsf{tc}(R)}$. Note that the order of
the function calls is important for correctness, so we examine
$\difffn{\mathsf{tc}(R)}$ first. For each call to function
$\diff{\mathsf{tc}(R)}{\ipos}{\Deltapos}{\Deltaneg}$ in
line~\ref{alg:dredmod:reddel:diff}, we have $\Deltapos = D \setminus A$, which
contains only facts from previous strata. Thus,
$\apply{\Pi^{\mathsf{tc}(R)}}{\ipos \appargs \Deltapos, \Deltaneg} = \emptyset$
clearly holds since the only rule in $\Pi^{\mathsf{tc}(R)}$ is recursive. Our
implementation for $\diff{\mathsf{tc}(R)}{\ipos}{\Deltapos}{\Deltaneg}$ always
return empty set as well, so by Definition~\ref{def:diff}
$\difffn{\mathsf{tc}(R)}$ captures $\Pi^{\mathsf{tc}(R)}$ for the calls in
line~\ref{alg:dredmod:reddel:diff}. For the same reason,
$\difffn{\mathsf{tc}(R)}$ captures $\Pi^{\mathsf{tc}(R)}$ for the calls in
line~\ref{alg:dredmod:redinsert:diff} as well. We next focus on
$\delfn{\mathsf{tc}(R)}$, $\redfn{\mathsf{tc}(R)}$, and
$\addfn{\mathsf{tc}(R)}$.

\begin{claim}\label{claimdeltc}
    If ${\mat{\Pi^{\mathsf{tc}(R)}}{X_R} = \mat{\Pi^{\mathsf{tc}(R)}}{\ipos}}$
    holds before each call to
    $\del{\mathsf{tc}(R)}{\ipos}{\ineg}{\Delta}{\Cnr}$ in a run of
    Algorithm~\ref{alg:dredmod}, then ${\mat{\Pi^{\mathsf{tc}(R)}}{X_R
    \setminus \Delta} = \mat{\Pi^{\mathsf{tc}(R)}}{(\ipos \setminus \Delta)
    \setminus J}}$ holds, where ${J =
    \del{\mathsf{tc}(R)}{\ipos}{\ineg}{\Delta}{\Cnr}}$. Moreover, ${J_l
    \subseteq J \subseteq J_u}$ holds, where $J_l$ and $J_u$ are the lower and
    upper bounds from Definition~\ref{def:del}, respectively.
\end{claim}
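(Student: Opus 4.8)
The plan is to imitate the structure of the proof of Claim~\ref{claimaddtc}: first I would pin down a purely combinatorial description of what Algorithm~\ref{alg:deltc} computes, and then read both conclusions off of it. Throughout I would use the hypothesis $\mat{\Pi^{\mathsf{tc}(R)}}{X_R} = \mat{\Pi^{\mathsf{tc}(R)}}{\ipos}$, which says that $R[\ipos]$ is exactly the reachability relation of the edge set $X_R$ and is already transitively closed (note that the single rule is positive, so $\ineg$ plays no role). I would also reuse the characterisations established in the proof of Theorem~\ref{theorem:correctness-dredmod}: the upper bound is $J_u = \invsemi{\Pi^{\mathsf{tc}(R)}}{\ipos}{\ineg}{\Delta}$ and the lower bound is $J_l = \invsemic{\Pi^{\mathsf{tc}(R)}}{\ipos}{\ineg}{\Delta}{\Cnr}$. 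By induction on the execution of the two loops --- mirroring the chain induction in Claim~\ref{claimaddtc} --- I would show that the final `seen' set satisfies $S \setminus \Delta = \{\, R(u,w) \in R[\ipos] \mid u \text{ reaches } w \text{ through at least one deleted edge}\,\}$, i.e.\ the facts rendered underivable by removing $\Delta$; loop~1 seeds this set by composing a deleted fact on the left with an $\ipos$-fact on the right, and loop~2 extends an affected right-hand fact by prepending a surviving edge of $X_R \setminus \Delta$. From this, $J = \{\, F \in S \setminus \Delta \mid \Cnr(F) = 0 \,\}$.

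For $J \subseteq J_u$ I would argue by induction on the order in which facts enter $S$ that $S \setminus \Delta \subseteq \Delta \cup J_u$: every fact added in either loop is the head $R(u,w)$ of a transitivity instance one of whose body atoms already lies in $\Delta \cup J_u$ (using $X_R \setminus \Delta \subseteq R[\ipos] \subseteq \ipos$), so $R(u,w) \in \apply{\Pi^{\mathsf{tc}(R)}}{\ipos \appargs \Delta \cup J_u} \subseteq \Delta \cup J_u$ by the fixpoint property of the upper bound. Since $J \subseteq S \setminus \Delta$, this gives $J \subseteq J_u$. For $J_l \subseteq J$ I would verify that $J$ satisfies the defining property of the lower bound in Definition~\ref{def:del}: for each $F = R(x,z) \in \apply{\Pi^{\mathsf{tc}(R)}}{\ipos \appargs \Delta \cup J}$ there are $R(x,y),R(y,z) \in \ipos$ with at least one in $\Delta \cup J$, and the invariant then forces $R(x,z)$ to be reachable through a deleted edge, hence $R(x,z) \in S$; therefore either $\Cnr(R(x,z)) = 0$ and $R(x,z) \in J$, or $\Cnr(R(x,z)) > 0$. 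Minimality of $J_l$ then yields $J_l \subseteq J$.

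For the materialisation identity I would use the description of $J$ to show that $R[(\ipos \setminus \Delta)\setminus J]$ and $X_R \setminus \Delta$ have the same transitive closure. A surviving $R$-fact is either unaffected by $\Delta$, in which case it is still witnessed by a path avoiding deleted edges, i.e.\ a path in $X_R \setminus \Delta$, or it is affected but retained because $\Cnr(F) > 0$; the decisive observation is that an affected fact with a positive nonrecursive counter is itself an external fact and so already belongs to $X_R \setminus \Delta$, whence it cannot enlarge the closure beyond $\mat{\Pi^{\mathsf{tc}(R)}}{X_R \setminus \Delta}$. Closing both sides transitively gives the claimed equality.

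The hard part will be the two bookkeeping subtleties that make the \emph{linear} computation over $X_R$ agree with the transitivity-rule specification underlying $J_l$ and $J_u$. First, I must justify that loop~1 together with loop~2 over the reduced edge set $X_R \setminus \Delta$ still reaches \emph{every} affected fact despite the pruning by $S$, so that no fact required for $J_l \subseteq J$ is skipped; this amounts to arguing that any witnessing path can be rotated to expose its first deleted edge to loop~1 and have its surviving prefix rebuilt by loop~2, with $\ipos \setminus S$ only suppressing recomputation. Second, for the materialisation identity I must establish the auxiliary invariant that every $R$-fact with a positive nonrecursive counter lies in $X_R$. Both points rely essentially on the hypothesis that $R[\ipos]$ coincides with the reachability relation of $X_R$ and is transitively closed.
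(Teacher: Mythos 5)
Your plan follows the same route the paper takes---which for this claim is nothing more than a one-sentence appeal to analogy: the paper's entire proof says the argument is analogous to that of Claim~\ref{claimaddtc} and rests on the intuition that Algorithm~\ref{alg:deltc} implements semina\"ive evaluation for the linearised rule $X(x,y) \wedge R(y,z) \rightarrow R(x,z)$. Your chain-based characterisation of the seen set $S$, the identification of the bounds with $\invsemi{\Pi^{\mathsf{tc}(R)}}{\ipos}{\ineg}{\Delta}$ and $\invsemic{\Pi^{\mathsf{tc}(R)}}{\ipos}{\ineg}{\Delta}{\Cnr}$ from the proof of Theorem~\ref{theorem:correctness-dredmod}, and the ``rotate the path to expose its first deleted edge'' argument are exactly what that analogy amounts to once spelled out, so on approach you and the paper agree; you are supplying detail the paper omits, and your two ``hard parts'' are indeed the genuinely hard parts.

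There is, however, one real gap, located in your closing sentence. You read the hypothesis as saying that $R[\ipos]$ \emph{is} the reachability relation of $X_R$ and is transitively closed, and you claim both subtleties follow from it. Neither is right. The paper deliberately states this claim's precondition as an equality of closures, ${\mat{\Pi^{\mathsf{tc}(R)}}{X_R} = \mat{\Pi^{\mathsf{tc}(R)}}{\ipos}}$, rather than in the stronger form of Claim~\ref{claimaddtc}, precisely because in later iterations of the overdeletion loop of Algorithm~\ref{alg:dredmod} facts have already been removed and $R[\ipos]$ is no longer closed; your loop-1 seeding step, which needs the suffix fact $R(v,w)$ of an affected path to be present in $\ipos$, cannot assume closedness and must instead be repaired by a cross-call argument. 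More seriously, your ``decisive observation''---that an affected fact with ${\Cnr(F) > 0}$ lies in $X_R$---does \emph{not} follow from the stated hypothesis at all: take ${X_R = \{R(a,b), R(b,c)\}}$, ${R[\ipos] = \{R(a,b), R(b,c), R(a,c)\}}$, ${\Cnr(R(a,c)) > 0}$ with ${R(a,c) \notin X_R}$, and ${\Delta = \{R(a,b)\}}$; the closure equality holds, yet Algorithm~\ref{alg:deltc} retains $R(a,c)$ while $\mat{\Pi^{\mathsf{tc}(R)}}{X_R \setminus \Delta}$ loses it, so the claimed identity fails. What makes the observation true is a run-level invariant of Algorithms~\ref{and:matmod} and~\ref{alg:dredmod}: any fact acquiring a nonrecursive derivation is fed to $\addfn{\mathsf{tc}(R)}$ as external input and thereby placed in $X_R$. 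Establishing that requires an induction over the entire run, outside the present claim, so deriving it ``from the hypothesis'' as you propose would fail. (In fairness, the paper's one-line proof never surfaces this issue; your proposal, which at least names the invariant, is the more honest account---it just attributes it to the wrong source.)
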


\begin{proof}
The proof is analogous to the proof of Claim~\ref{claimaddtc} and is based on
the intuition that the function implements semina\"ive evaluation for the rule
$X(x,y) \wedge R(y,z) \rightarrow R(x,z)$.
\end{proof}

That $\delfn{\mathsf{tc}(R)}$ captures $\Pi^{\mathsf{tc}(R)}$ during the
execution of Algorithm~\ref{alg:dredmod} immediately follows from
Claim~\ref{claimdeltc} and Definition~\ref{def:del}.

\begin{claim}\label{claimredtc}
    If ${\mat{\Pi^{\mathsf{tc}(R)}}{X_R} = \mat{\Pi^{\mathsf{tc}(R)}}{\ipos
    \setminus \Delta}}$ holds before each call to
    $\red{\mathsf{tc}(R)}{\ipos}{\ineg}{\Delta}$ in a run of
    Algorithm~\ref{alg:dredmod}, then ${\mat{\Pi^{\mathsf{tc}(R)}}{X_R} =
    R[(\ipos \setminus \Delta)] \cup J}$ holds, where ${J =
    \red{\mathsf{tc}(R)}{\ipos}{\ineg}{\Delta}}$. Moreover, $J$ is the smallest
    dataset satisfying $\apply{\Pi^{\mathsf{tc}(R)}}{(\ipos \setminus \Delta)
    \cup J, \ineg} \cap \Delta \subseteq J$.
\end{claim}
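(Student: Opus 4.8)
The plan is to first characterise the returned set $J$ combinatorially and then derive both assertions from that characterisation. Inspecting Algorithm~\ref{alg:redtc}, the function collects every fact $R(u,w)$ such that $u$ is the source of some deleted fact $R(u,v) \in \Delta$ and $w$ is reachable from $u$ along $R$-facts in $X_R$, and finally intersects with $\Delta$. Since reachability via $X_R$ is exactly membership in $\mat{\Pi^{\mathsf{tc}(R)}}{X_R}$, and since any $R(u,w) \in \Delta$ trivially witnesses $u$ as the source of a deleted fact, I would first establish the identity $J = \mat{\Pi^{\mathsf{tc}(R)}}{X_R} \cap \Delta$. Before that I would record two invariants of the algorithm that I use repeatedly: at the point of the call $\ipos = I$ is still the full materialisation, so $R[\ipos]$ is transitively closed; and $X_R \subseteq R[\ipos]$ while overdeletion has already purged the deleted facts, so $X_R \cap \Delta = \emptyset$ and hence $X_R \subseteq R[\ipos \setminus \Delta]$.

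For the first conclusion $\mat{\Pi^{\mathsf{tc}(R)}}{X_R} = R[\ipos \setminus \Delta] \cup J$ I would argue both inclusions. For $\supseteq$: $J \subseteq \mat{\Pi^{\mathsf{tc}(R)}}{X_R}$ by the identity, and $R[\ipos \setminus \Delta] \subseteq \mat{\Pi^{\mathsf{tc}(R)}}{\ipos \setminus \Delta} = \mat{\Pi^{\mathsf{tc}(R)}}{X_R}$ by the hypothesised invariant. For $\subseteq$: since $R[\ipos]$ is closed and $X_R \subseteq R[\ipos]$, monotonicity of materialisation gives $\mat{\Pi^{\mathsf{tc}(R)}}{X_R} \subseteq R[\ipos]$; then each $R(u,w) \in \mat{\Pi^{\mathsf{tc}(R)}}{X_R}$ lies either in $\Delta$, whence in $J$ by the identity, or outside $\Delta$, whence in $R[\ipos \setminus \Delta]$ using $R[\ipos] = R[\ipos \setminus \Delta] \cup R[\Delta]$.

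For the minimality statement I would split into the soundness and the minimality parts of Definition~\ref{def:red}. Soundness is immediate from the first conclusion: the set $R[(\ipos \setminus \Delta) \cup J] = R[\ipos \setminus \Delta] \cup J = \mat{\Pi^{\mathsf{tc}(R)}}{X_R}$ is transitively closed, so a single application of rule~\eqref{transitiverule} produces nothing new, and intersecting with $\Delta$ while using $R[\ipos \setminus \Delta] \cap \Delta = \emptyset$ yields $\apply{\Pi^{\mathsf{tc}(R)}}{(\ipos \setminus \Delta) \cup J, \ineg} \cap \Delta \subseteq J$. For minimality, given any $J'$ with $\apply{\Pi^{\mathsf{tc}(R)}}{(\ipos \setminus \Delta) \cup J', \ineg} \cap \Delta \subseteq J'$ and any $R(u,w) \in J$, I would fix a shortest $X_R$-chain $R(u,a_1),\dots,R(a_{k-1},w)$ and prove by induction along it that every prefix fact $R(u,a_i)$ lies in $(\ipos \setminus \Delta) \cup J'$: the base fact is an $X_R$-edge and so lies in $R[\ipos \setminus \Delta]$; in the step, transitivity derives $R(u,a_{i+1})$ from $R(u,a_i)$ and the $X_R$-edge $R(a_i,a_{i+1})$, and if this fact is in $\Delta$ then the closure condition forces it into $J'$, whereas if it is not in $\Delta$ the first conclusion places it in $R[\ipos \setminus \Delta]$. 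Reaching $w$ and using $R(u,w) \in \Delta$ forces $R(u,w) \in J'$, giving $J \subseteq J'$.

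I expect the minimality argument to be the main obstacle: the condition on $J'$ is only a one-step closure condition, so the induction succeeds only because each intermediate transitive fact along the chain—possibly itself overdeleted—can be classified via the already established first conclusion as either forced into $J'$ by the closure condition or safely resident in the surviving facts $R[\ipos \setminus \Delta]$. A secondary subtlety is justifying the invariants $X_R \subseteq R[\ipos \setminus \Delta]$ and the transitive closedness of $R[\ipos]$ from the state of Algorithm~\ref{alg:dredmod} at the point of the call, since these are not part of the claim's hypothesis but follow from how $\addfn{\mathsf{tc}(R)}$ and $\delfn{\mathsf{tc}(R)}$ maintain $X_R$.
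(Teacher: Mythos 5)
Your proposal is correct and takes essentially the same route as the paper: both prove ${\mat{\Pi^{\mathsf{tc}(R)}}{X_R} = R[\ipos \setminus \Delta] \cup J}$ by double inclusion (relying on the fact that $\ipos = I$ is the transitively closed materialisation at the point of the call), read the one-step closure condition off that equality, and establish minimality by an induction forcing any candidate $J'$ to contain $J$. The only divergence is minor: your minimality induction runs along shortest $X_R$-chains and therefore additionally needs the run invariant ${X_R \cap \Delta = \emptyset}$ (which you correctly flag as coming from how $\delfn{\mathsf{tc}(R)}$ maintains $X_R$, not from the claim's hypothesis), whereas the paper's induction proceeds over derivation stages from $\ipos \setminus \Delta$, for which the stated hypothesis ${\mat{\Pi^{\mathsf{tc}(R)}}{X_R} = \mat{\Pi^{\mathsf{tc}(R)}}{\ipos \setminus \Delta}}$ alone suffices.
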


\begin{proof}
First we show that $\mat{\Pi^{\mathsf{tc}(R)}}{X_R} = R[(\ipos \setminus
\Delta)] \cup J$ holds. The $\supseteq$ direction of the property is trivial:
each fact $R(u,v)$ added to $J$ in line~\ref{alg:deltc:addtoj} corresponds to a
chain of facts in $X_R$, so $R(u,v) \in \mat{\Pi^{\mathsf{tc}(R)}}{X_R}$ holds.
For the $\subseteq$ direction, consider arbitrary fact $R(u,v)$ such that
$R(u,v) \in \mat{\Pi^{\mathsf{tc}(R)}}{X_R}$ and $R(u,v) \not \in R[(\ipos
\setminus \Delta)]$ hold. Since the original materialisation $I$ is passed as
$\ipos$ when the function gets called, we have $R(u,v) \in \ipos \setminus
(\ipos \setminus \Delta) = \Delta$. But then,
lines~\ref{alg:deltc:findsource}--\ref{alg:deltc:addtoj} ensure that a chain of
facts in $X_R$ deriving $R(u,v)$ will be found and that $R(u,v)$ is added to
$J$. We now show that $J$ is the smallest dataset satisfying
$\apply{\Pi^{\mathsf{tc}(R)}}{(\ipos \setminus \Delta) \cup J, \ineg} \cap
\Delta \subseteq J$. Consider arbitrary fact $R(u,v) \in
\apply{\Pi^{\mathsf{tc}(R)}}{(\ipos \setminus \Delta) \cup J, \ineg} \cap
\Delta$. By $\mat{\Pi^{\mathsf{tc}(R)}}{X_R} = R[(\ipos \setminus \Delta)] \cup
J$, we have $R(u,v) \in \Delta \cap \mat{\Pi^{\mathsf{tc}(R)}}{X_R}$. But then,
lines~\ref{alg:deltc:findsource}--\ref{alg:deltc:addtoj} ensure $R(u,v) \in J$.
Moreover, it is straightforward to show by induction that any $J'$ satisfying
$\apply{\Pi^{\mathsf{tc}(R)}}{(\ipos \setminus \Delta) \cup J', \ineg} \cap
\Delta \subseteq J'$ must at least contain all facts in
$\mat{\Pi^{\mathsf{tc}(R)}}{\ipos \setminus \Delta} \cap \Delta =
\mat{\Pi^{\mathsf{tc}(R)}}{X_R} \setminus (\ipos \setminus \Delta) = J$---in
other words, $J$ is indeed the smallest dataset satisfying
$\apply{\Pi^{\mathsf{tc}(R)}}{(\ipos \setminus \Delta) \cup J, \ineg} \cap
\Delta \subseteq J$.
\end{proof}

That $\redfn{\mathsf{tc}(R)}$ captures $\Pi^{\mathsf{tc}(R)}$ during the
execution of Algorithm~\ref{alg:dredmod} immediately follows from
Claim~\ref{claimredtc} and Definition~\ref{def:red}. Finally, the proof of
$\addfn{\mathsf{tc}(R)}$ capturing $\Pi^{\mathsf{tc}(R)}$ during the execution
of Algorithm~\ref{alg:dredmod} is analogous to the proof of
Claim~\ref{claimaddtc} so we omit the details for the sake of brevity.

\subsection{Proof of Theorem~\ref{theorem:correctness-symtrans}}

\correctnesssymtrans*

We consider each of these functions in combination with each of the relevant
algorithms in a separate claim. We first consider Algorithm~\ref{and:matmod}.
For $C_R$ a set of sets (representing a set of connected components of edges
consisting of relation $R$), let ${\mathsf{Close}(C_R) = \bigcup_{U \in
C_R}{\bigcup_{u,v \in U}{\{R(u,v)\}}}}$.

\begin{claim}\label{claimadd}
    If ${\mathsf{Close}(C_R) = R[\ipos \setminus \Delta]}$ holds before each
    call to $\add{\mathsf{stc}(R)}{\ipos}{\ineg}{\Delta}$ in a run of
    Algorithm~\ref{and:matmod}, then $C_R$ is updated so that
    ${\mathsf{Close}(C_R) = R[\ipos] \cup J}$ holds, where ${J =
    \add{\mathsf{stc}(R)}{\ipos}{\ineg}{\Delta}}$. Moreover, ${J =
    \semi{\Pi^{\mathsf{stc}(R)}}{\ipos}{\ineg}{\Delta}}$ holds.
\end{claim}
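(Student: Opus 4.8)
The plan is to mirror the proof of Claim~\ref{claimaddtc}: I first establish that $\textsc{CloseEdges}$ maintains $C_R$ as a correct representation of connected components and returns exactly the newly derived facts, and then I transfer this to the two required statements about the materialisation and the semina\"ive computation. Throughout, $\ineg$ plays no role, since the only rules in $\Pi^{\mathsf{stc}(R)}$ have no negative body atoms.

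For the first equation, I would argue by induction on the edges of $\Delta$ processed in lines~\ref{alg:addstc:loopstarts}--\ref{alg:addstc:loopends} that, upon termination, the partition represented by $C_R$ is precisely the partition into connected components of the undirected graph whose edges are those represented by the old $C_R$ together with $\Delta$: each endpoint not yet occurring in a component yields a fresh singleton (lines~\ref{alg:addstc:createucomonent} and~\ref{alg:addstc:createvcomonent}), and each edge joining two distinct components merges them (line~\ref{alg:addstc:mergecomponents}). Since the old $C_R$ represents the edge set $R[\ipos \setminus \Delta]$ by assumption and ${\Delta \subseteq \ipos}$, the updated $C_R$ represents ${R[\ipos \setminus \Delta] \cup R[\Delta] = R[\ipos]}$, so $\mathsf{Close}(C_R)$ after the call equals the symmetric--transitive closure of $R[\ipos]$, that is $\mat{\Pi^{\mathsf{stc}(R)}}{R[\ipos]}$. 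Alongside this I would prove the accounting identity that the internal return value $\textsc{CloseEdges}(\Delta)$ equals $\mathsf{Close}(C_R^{\mathrm{new}}) \setminus \mathsf{Close}(C_R^{\mathrm{old}})$: when a fresh singleton $\{u\}$ is created the only new closure fact is the self-loop $R(u,u)$, and when components $U$ and $V$ are merged the only new closure facts are the cross pairs ${\{R(u',v'), R(v',u') \mid u' \in U,\, v' \in V\}}$, exactly the facts placed in $J$ in lines~\ref{alg:addstc:createucomonent}, \ref{alg:addstc:createvcomonent}, and~\ref{alg:addstc:merge}, since all intra-component pairs were already present. Combining the two parts with ${\mathsf{Close}(C_R^{\mathrm{old}}) = R[\ipos \setminus \Delta]}$ gives ${\mathsf{Close}(C_R^{\mathrm{new}}) = R[\ipos \setminus \Delta] \cup \textsc{CloseEdges}(\Delta)}$; since ${\add{\mathsf{stc}(R)}{\ipos}{\ineg}{\Delta} = \textsc{CloseEdges}(\Delta) \setminus \ipos}$ and ${R[\ipos] \subseteq \mathsf{Close}(C_R^{\mathrm{new}})}$, a short set manipulation then yields ${\mathsf{Close}(C_R^{\mathrm{new}}) = R[\ipos] \cup J}$, the first required equation.

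For the claim ${J = \semi{\Pi^{\mathsf{stc}(R)}}{\ipos}{\ineg}{\Delta}}$ I would reuse the reasoning from Claim~\ref{claimaddtc}. The invariant ${\mathsf{Close}(C_R^{\mathrm{old}}) = R[\ipos \setminus \Delta]}$ shows that ${\ipos \setminus \Delta}$ is already symmetric--transitively closed, so ${\mat{\Pi^{\mathsf{stc}(R)}}{\ipos \setminus \Delta} = R[\ipos \setminus \Delta]}$, while the first equation gives ${\mat{\Pi^{\mathsf{stc}(R)}}{\ipos} = \mat{\Pi^{\mathsf{stc}(R)}}{\ipos \cup J} = R[\ipos] \cup J}$. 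A routine induction on the semina\"ive construction shows that it correctly closes $\ipos$, namely ${\mat{\Pi^{\mathsf{stc}(R)}}{\ipos \setminus \Delta} \cup R[\Delta] \cup \semi{\Pi^{\mathsf{stc}(R)}}{\ipos}{\ineg}{\Delta} = \mat{\Pi^{\mathsf{stc}(R)}}{\ipos}}$, and comparing the two expressions for $\mat{\Pi^{\mathsf{stc}(R)}}{\ipos}$ isolates ${J = \semi{\Pi^{\mathsf{stc}(R)}}{\ipos}{\ineg}{\Delta}}$. Together with property~\eqref{semicomputesadd} this is what makes $\addfn{\mathsf{stc}(R)}$ capture $\Pi^{\mathsf{stc}(R)}$ in Algorithm~\ref{and:matmod}.

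The main obstacle I anticipate is the accounting identity, specifically that it is order-independent: the edges of $\Delta$ may be processed in any order, and some edge ${R(u,v) \in \Delta}$ may find $u$ and $v$ already merged by earlier edges, in which case it contributes nothing to $J$. I would handle this by noting that the final partition, and hence $\mathsf{Close}(C_R^{\mathrm{new}})$, depends only on the edge set and not on the processing order, and that the new closure facts accumulated in $J$ telescope to ${\mathsf{Close}(C_R^{\mathrm{new}}) \setminus \mathsf{Close}(C_R^{\mathrm{old}})}$ regardless of order, since each singleton creation or merge contributes a disjoint block of new pairs that is never revisited.
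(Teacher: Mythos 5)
Your proposal is correct and takes essentially the same approach as the paper's proof: both establish ${\mathsf{Close}(C_R) = R[\ipos] \cup J}$ by tracking the three places where $C_R$ is modified and observing that $J$ records exactly the corresponding new closure facts, and both then obtain ${J = \semi{\Pi^{\mathsf{stc}(R)}}{\ipos}{\ineg}{\Delta}}$ from ${\mat{\Pi^{\mathsf{stc}(R)}}{\ipos \setminus \Delta} = R[\ipos \setminus \Delta]}$, ${\mat{\Pi^{\mathsf{stc}(R)}}{\ipos} = R[\ipos] \cup J}$, and the fact that the semina\"ive computation correctly closes $\ipos$ under $\Pi^{\mathsf{stc}(R)}$. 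Your order-independent ``accounting identity'' is merely a more explicit packaging of the paper's two-inclusion argument for the first equation.
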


\begin{proof}
First we show that $\mathsf{Close}(C_R) = R[\ipos] \cup J$ holds after the
function call. For the $\subseteq$ direction, please note that $C_R$ can be
updated in only three places---lines~\ref{alg:addstc:createucomonent},
\ref{alg:addstc:createvcomonent}, and \ref{alg:addstc:mergecomponents}. In
line~\ref{alg:addstc:createucomonent} the first command adds a new component $U
= \{u\}$ to $C_R$. This will add $R(u,u)$ to $\mathsf{Close}(C_R)$. But then,
the second command in line~\ref{alg:addstc:createucomonent} ensures that
$R(u,u) \in \ipos \cup J$ holds. The same reasoning applies to
line~\ref{alg:addstc:createvcomonent}. In line~\ref{alg:addstc:mergecomponents}
two components $U$ and $V$ are merged; but then,
lines~\ref{alg:addstc:mergeloop}--\ref{alg:addstc:merge} ensure that the
affected facts are added to $\ipos \cup J$. Thus, $\mathsf{Close}(C_R)
\subseteq R[\ipos] \cup J$ holds after the update. For the $\supseteq$
direction, please note that the algorithm ensures that for each fact $R(u,v)
\in R[\Delta] \cup J$, $u$ and $v$ are in the same component in $C_R$ after the
update; moreover, $R[I \setminus \Delta] \subseteq \mathsf{Close}(C_R)$ already
holds before the update; so the $\supseteq$ direction of the property
$\mathsf{Close}(C_R) = R[\ipos] \cup J$ also holds after the update. Next we
show that $J = \semi{\Pi^{\mathsf{stc}(R)}}{\ipos}{\ineg}{\Delta}$ holds.
Before the function is executed, we have $\mathsf{Close}(C_R) = R[\ipos
\setminus \Delta]$, which implies $\mat{\Pi^{\mathsf{stc}(R)}}{\ipos \setminus
\Delta} = R[\ipos \setminus \Delta]$; similarly, after the function call we
have $\mathsf{Close}(C_R) = R[\ipos] \cup J$, which implies
$\mat{\Pi^{\mathsf{stc}(R)}}{\ipos} = \mat{\Pi^{\mathsf{stc}(R)}}{\ipos \cup J}
= R[\ipos] \cup J$. Moreover, it can be easily verified by induction on the
construction of $\semi{\Pi^{\mathsf{stc}(R)}}{\ipos}{\ineg}{\Delta}$ that
$\mat{\Pi^{\mathsf{stc}(R)}}{\ipos \setminus \Delta} \cup R[\Delta] \cup
\semi{\Pi^{\mathsf{stc}(R)}}{\ipos}{\ineg}{\Delta} =
\mat{\Pi^{\mathsf{stc}(R)}}{\ipos}$---that is, the seminaive computation
correctly closes $\ipos$ with respect to $\Pi^{\mathsf{stc}(R)}$. Therefore, we
have $J = \semi{\Pi^{\mathsf{stc}(R)}}{\ipos}{\ineg}{\Delta}$, as required.
\end{proof}

The fact that $\addfn{\mathsf{stc}(R)}$ captures $\Pi^{\mathsf{stc}(R)}$ during
the execution of Algorithm~\ref{and:matmod} directly follows from
Claim~\ref{claimadd} and property~\eqref{semicomputesadd}. Next we show that
during the execution of Algorithm~\ref{alg:dredmod}, functions
$\addfn{\mathsf{stc}(R)}$, $\delfn{\mathsf{stc}(R)}$,
$\difffn{\mathsf{stc}(R)}$, and $\redfn{\mathsf{stc}(R)}$ capture
$\Pi^{\mathsf{stc}(R)}$. Note that the order of the function calls is important
for correctness, so we examine $\difffn{\mathsf{stc}(R)}$ first. For each call
to function $\diff{\mathsf{stc}(R)}{\ipos}{\Deltapos}{\Deltaneg}$ in
line~\ref{alg:dredmod:reddel:diff}, we have $\Deltapos = D \setminus A$, which
contains only facts from previous strata. Thus
$\apply{\Pi^{\mathsf{stc}(R)}}{\ipos \appargs \Deltapos, \Deltaneg} =
\emptyset$ clearly holds since both rules in $\Pi^{\mathsf{stc}(R)}$ are
recursive. Our implementation for
$\diff{\mathsf{stc}(R)}{\ipos}{\Deltapos}{\Deltaneg}$ always return empty set
as well, so by Definition~\ref{def:diff} $\difffn{\mathsf{stc}(R)}$ captures
$\Pi^{\mathsf{stc}(R)}$ for the calls in line~\ref{alg:dredmod:reddel:diff}.
For the same reason, $\difffn{\mathsf{stc}(R)}$ captures
$\Pi^{\mathsf{stc}(R)}$ for the calls in line~\ref{alg:dredmod:redinsert:diff}
as well. We next focus on $\delfn{\mathsf{stc}(R)}$, $\redfn{\mathsf{stc}(R)}$,
and $\addfn{\mathsf{stc}(R)}$.

\begin{claim}\label{claimdelstc}
    If ${\mathsf{Close}(C_R) \cup Y_R = R[\ipos]}$ holds before each call to
    $\del{\mathsf{stc}(R)}{\ipos}{\ineg}{\Delta}{\Cnr}$ in a run of
    Algorithm~\ref{alg:dredmod}, then $C_R$ and $Y_R$ are updated so that
    ${\mathsf{Close}(C_R) \cup Y_R = R[\ipos \setminus \Delta] \setminus J}$
    holds, where ${J = \del{\mathsf{stc}(R)}{\ipos}{\ineg}{\Delta}{\Cnr}}$.
    Moreover, $J_l \subseteq J \subseteq J_u$ holds, where $J_l$ and $J_u$ are
    the lower and upper bounds from Definition~\ref{def:del}, respectively.
\end{claim}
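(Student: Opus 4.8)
The plan is to follow the template of Claim~\ref{claimaddtc} and Claim~\ref{claimdeltc}: first verify the bookkeeping invariant relating the data structures $C_R$ and $Y_R$ to the surviving $R$-facts, and then relate the returned set $J$ to the bounds $J_l$ and $J_u$ of Definition~\ref{def:del}. Throughout I write $\mathcal{A}$ for the set of \emph{affected} components, i.e.\ those $U \in C_R$ removed in line~\ref{alg:delstc:deltam} because $\{u,v\} \subseteq U$ for some $R(u,v) \in \Delta$; I let $W$ be the set of facts $R(u',v')$ with $u',v' \in U$ for some $U \in \mathcal{A}$, and I split $W = W_0 \cup W_{+}$ according to whether $\Cnr$ is zero or positive. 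With this notation, lines~\ref{alg:delstc:loopcomponents}--\ref{alg:delstc:deltam} set the accumulator to $W_0$, return $J = W_0 \setminus \Delta$, grow $Y_R$ to $Y_R \cup W_{+}$, and shrink $C_R$ so that $\mathsf{Close}(C_R)$ loses exactly $W$.

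First I would establish $\mathsf{Close}(C_R) \cup Y_R = R[\ipos \setminus \Delta] \setminus J$. Starting from the hypothesis $R[\ipos] = \mathsf{Close}(C_R) \cup Y_R$, the description above gives the new value $(\mathsf{Close}(C_R) \setminus W) \cup Y_R \cup W_{+}$, whereas the target rewrites (using $R[\ipos \setminus \Delta] = R[\ipos] \setminus R[\Delta]$ and $J = W_0 \setminus \Delta$) as $R[\ipos] \setminus W_0$. A short set-algebra computation shows these agree, \emph{provided} three auxiliary facts hold: components are vertex-disjoint and their facts avoid $Y_R$ (so $W \subseteq \mathsf{Close}(C_R)$ and $Y_R \cap W = \emptyset$); every deleted $R$-fact lies in an affected component (so $R[\Delta] \subseteq W$), which needs $\Delta \cap Y_R = \emptyset$; and every deleted $R$-fact has $\Cnr = 0$ (so $R[\Delta] \subseteq W_0$, hence $W_0 \cap \Delta = R[\Delta]$). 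Granting these, $R[\Delta] \cup (W_0 \setminus \Delta) = W_0$ and the two expressions coincide.

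For the bounds I would use that $J_l = \invsemic{\Pi^{\mathsf{stc}(R)}}{\ipos}{\ineg}{\Delta}{\Cnr}$ and $J_u = \invsemi{\Pi^{\mathsf{stc}(R)}}{\ipos}{\ineg}{\Delta}$, as shown in the proof of Theorem~\ref{theorem:correctness-dredmod}. The key structural observation is that every rule of $\Pi^{\mathsf{stc}(R)}$ preserves connected components: a fact derived from a trigger in component $U$ together with a surviving $R$-fact again has both endpoints in $U$, since the shared vertex forces the second atom to lie in $\mathsf{Close}(C_R)$ and hence in $U$. For $J \subseteq J_u$, I argue that DRed-style overdeletion, which ignores counters, reaches \emph{every} pair of an affected component: from a deleted edge $R(u,v)$ with $u,v \in U$ one reaches all of $U$ by routing through $u$ via transitivity and symmetry over the clique in $R[\ipos]$; thus $W \setminus \Delta \subseteq J_u$, and since $J \subseteq W \setminus \Delta$ the inclusion follows. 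For $J_l \subseteq J$, the component-preservation observation shows every fact of $J_l$ lies in an affected component, and minimality of $J_l$ forces all its facts to have $\Cnr = 0$; hence $J_l \subseteq W_0 \setminus \Delta = J$.

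I expect the main obstacle to be the coordination between $C_R$ and $Y_R$ across the \emph{repeated} calls to $\delfn{\mathsf{stc}(R)}$ inside the overdeletion loop of Algorithm~\ref{alg:dredmod}: establishing exactly the three auxiliary facts above and, underlying them, the vertex-partition property that at every call each vertex is either in a unique live component (its incident surviving facts in $\mathsf{Close}(C_R)$) or is componentless (its surviving facts parked in $Y_R$ and carrying $\Cnr > 0$). This partition is what both keeps $\mathsf{Close}(C_R) \cup Y_R$ closed under the rules and guarantees that the overdeletion batch $\Delta$, whose $R$-facts all carry $\Cnr = 0$, stays disjoint from $Y_R$ --- without it, a derivation through a $Y_R$-fact could in principle overdelete a pair straddling two components, breaking $J_l \subseteq J$. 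Since $Y_R$ is emptied by $\redfn{\mathsf{stc}(R)}$ and populated only with positive-counter facts during deletion, I would discharge these obligations by an induction over the loop iterations, threading the partition property and the counter discipline of DRed$^c$ as a strengthened invariant.
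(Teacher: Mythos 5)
Your proposal takes essentially the same approach as the paper's own proof: both first establish the bookkeeping invariant on $\mathsf{Close}(C_R) \cup Y_R$ (the paper by chasing elements through both inclusions, you by set algebra over $W = W_0 \cup W_+$), and both then derive $J_l \subseteq J \subseteq J_u$ from the $\mathsf{InvSemi}$ characterisations in the proof of Theorem~\ref{theorem:correctness-dredmod}, using clique-routing through a deleted edge for the upper bound and component-preservation plus counter-minimality of $J_l$ for the lower bound. The auxiliary facts you isolate ($\Delta \cap Y_R = \emptyset$, zero counters on the $R$-facts of $\Delta$, vertex-disjointness of live components) are precisely the properties the paper asserts only in passing (e.g.\ ``otherwise there exists another $\delfn{\mathsf{stc}(R)}$ that violates the lower bound constraint''), so your plan to discharge them by a strengthened induction over the overdeletion loop matches, and is if anything more explicit than, the published argument.
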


\begin{proof}
First we show that $\mathsf{Close}(C_R) \cup Y_R = R[\ipos \setminus \Delta]
\setminus J$ holds after the update. For the $\subseteq$ direction, consider
arbitrary $R(u,v) \in \mathsf{Close}(C_R) \cup Y_R$ after the update. If
$R(u,v) \in \mathsf{Close}(C_R)$, since the algorithm only removes components
from $C_R$, we have $R(u,v) \in \ipos$. Moreover,
lines~\ref{alg:delstc:loopcomponents} and \ref{alg:delstc:deltam}, and the fact
that the components in $C_R$ are disjoint ensure that the component containing
$u$ and $v$ is not removed from $C_R$ during the update. But then, by
line~\ref{alg:delstc:loopcomponents} and line~\ref{alg:delstc:incj} we know
that $R(u,v) \not\in \Delta \cup J$. Therefore, $R(u,v) \in R[\ipos \setminus
\Delta] \setminus J$ holds. If $R(u,v) \in Y_R$ after the update, then we have
$\Cnr(R(u,v)) > 0$, so line~\ref{alg:delstc:check} ensures that $R(u,v) \not\in
J$ holds; moreover, $R(u,v) \not\in \Delta$ holds since otherwise there exists
another $\delfn{\mathsf{stc}(R)}$ that violates the lower bound constraint.
Therefore, $\mathsf{Close}(C_R) \cup Y_R \subseteq R[\ipos \setminus \Delta]
\setminus J$ holds after the update.

For the $\supseteq$ direction, consider arbitrary $R(u,v) \in R[\ipos \setminus
\Delta] \setminus J$. By $R[\ipos \setminus \Delta] \setminus J \subseteq
R[\ipos]$ and $\mathsf{Close}(C_R) \cup Y_R = R[\ipos]$ we have $R(u,v) \in
\mathsf{Close}(C_R) \cup Y_R$ before the udpate. No fact is removed from $Y_R$
during the update, so if $R(u,v) \in Y_R$ before the udpate, the same still
holds after the update. Now if $R(u,v) \in \mathsf{Close}(C_R)$ before the
update, then there are two cases. If the component in $C_R$ containing $u$ and
$v$ is not removed, then clearly we still have $R(u,v) \in \mathsf{Close}(C_R)$
after the udpate. If the component is indeeded removed, then we have
$\Cnr(R(u,v)) > 0$ since otherwise we would have $R(u,v) \in \Delta \cup J$,
which leads to a contradiction. But then, $R(u,v)$ is added to $Y_R$, so
$R(u,v) \in \mathsf{Close}(C_R) \cup Y_R$ holds as well after the function call.

Now we show that $\invsemic{\Pi^{\mathsf{stc}(R)}}{\ipos}{\ineg}{\Delta}{\Cnr}
= J_l \subseteq J \subseteq J_u =
\invsemi{\Pi^{\mathsf{stc}(R)}}{\ipos}{\ineg}{\Delta}$ holds. For the
right-hand inclusion $J \subseteq J_u$, consider arbitrary $R(u,v) \in J$. The
fact is added to $J$ in line~\ref{alg:delstc:incj} since there exists a fact
$R(u',v')$ such that $R(u',v') \in \Delta$ holds, and $u',v'$ are in the same
component as $u,v$ before the function call. But then, by $\mathsf{Close}(C_R)
\cup Y_R = R[\ipos]$ we clearly have $R(u,u') \in \ipos$ and $R(v',v) \in
\ipos$. It is straightforward to verify by induction on the construction of
$\invsemi{\Pi^{\mathsf{stc}(R)}}{\ipos}{\ineg}{\Delta}$ that applicable rule
instances $R(u,u') \wedge R(u',v') \rightarrow R(u,v')$ and $R(u,v') \wedge
R(v',v) \rightarrow R(u,v)$ will be considered, so $R(u,v) \in J_u$ holds. For
the left-hand inclusion $J_l \subseteq J$, consider arbitrary $R(u,v) \in
\invsemic{\Pi^{\mathsf{stc}(R)}}{\ipos}{\ineg}{\Delta}{\Cnr}$, by the
definition of $\mathsf{InvSemi}$ and the fact that $\mathsf{Close}(C_R) \cup
Y_R = R[\ipos]$ holds before the update, we know that there exists a component
$U \in C_R$ that contains $u,v,u',v'$ such that $R(u',v') \in \Delta$. But
then, line~\ref{alg:delstc:loopcomponents} ensures that $U$ is deleted and
$R(u,v)$ is checked in line~\ref{alg:delstc:check}. $R(u,v) \in J_l$ ensures
$\Cnr(R(u,v)) = 0$, since otherwise $J_l$ would not satisfy the lower bound
condition in Definition~\ref{def:del}. Therefore, line~\ref{alg:delstc:incj}
ensures that $R(u,v)$ is in $J$ (and it is not in $\Delta$ due to $R(u,v) \in
J_l$ and the definition of $\mathsf{InvSemi}$).
\end{proof}

That $\delfn{\mathsf{stc}(R)}$ captures $\Pi^{\mathsf{stc}(R)}$ during the
execution of Algorithm~\ref{alg:dredmod} immediately follows from
Claim~\ref{claimdelstc} and Definition~\ref{def:del}.

\begin{claim}\label{claimredstc}
    If ${\mathsf{Close}(C_R) \cup Y_R = R[\ipos \setminus \Delta]}$ holds
    before each call to $\red{\mathsf{stc}(R)}{\ipos}{\ineg}{\Delta}$ in a run
    of Algorithm~\ref{alg:dredmod}, then $C_R$ is updated so that
    ${\mathsf{Close}(C_R) = R[\ipos \setminus \Delta] \cup J}$ holds, where ${J
    = \red{\mathsf{stc}(R)}{\ipos}{\ineg}{\Delta}}$. Moreover, $J$ is the
    smallest dataset satisfying ${\apply{\Pi^{\mathsf{stc}(R)}}{(\ipos
    \setminus \Delta) \cup J, \ineg} \cap \Delta \subseteq J}$.
\end{claim}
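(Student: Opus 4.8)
The plan is to prove the two assertions of the claim separately, reusing the analysis of $\textsc{CloseEdges}$ developed in the proof of Claim~\ref{claimadd} and the minimality argument from the proof of Claim~\ref{claimredtc}. Throughout, let $C_R$ denote the value of the global component set before the call to $\red{\mathsf{stc}(R)}{\ipos}{\ineg}{\Delta}$ and let $C_R'$ denote its value afterwards; recall that the function simply evaluates $\textsc{CloseEdges}(Y_R)$ and then resets $Y_R$ to $\emptyset$. I would use two standing facts: first, that $\Pi^{\mathsf{stc}(R)}$ contains no negation, so $\apply{\Pi^{\mathsf{stc}(R)}}{\cdot, \ineg}$ does not depend on $\ineg$; and second, that in the relevant call $\ipos$ is the input materialisation $I$, so $R[\ipos]$ is closed under $\Pi^{\mathsf{stc}(R)}$, i.e.\ $\mat{\Pi^{\mathsf{stc}(R)}}{R[\ipos]} = R[\ipos]$. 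All facts manipulated here are $R$-facts, so intersections with $\Delta$ may be read as intersections with $R[\Delta]$.

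First I would establish $\mathsf{Close}(C_R') = R[\ipos \setminus \Delta] \cup J$. The analysis of $\textsc{CloseEdges}$ from the proof of Claim~\ref{claimadd} shows that applying $\textsc{CloseEdges}(Y_R)$ to the current components yields $\mathsf{Close}(C_R') = \mat{\Pi^{\mathsf{stc}(R)}}{\mathsf{Close}(C_R) \cup Y_R}$ and returns exactly the newly derived facts $\mathsf{Close}(C_R') \setminus \mathsf{Close}(C_R)$. Substituting the precondition $\mathsf{Close}(C_R) \cup Y_R = R[\ipos \setminus \Delta]$ gives $\mathsf{Close}(C_R') = \mat{\Pi^{\mathsf{stc}(R)}}{R[\ipos \setminus \Delta]}$, which by closure of $R[\ipos]$ is contained in $R[\ipos] = R[\ipos \setminus \Delta] \cup \Delta$. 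Since $\Delta \subseteq \ipos$ we have $(\ipos \setminus \Delta) \cap \Delta = \emptyset$, and the precondition gives $\mathsf{Close}(C_R) \subseteq R[\ipos \setminus \Delta]$, hence $\mathsf{Close}(C_R) \cap \Delta = \emptyset$. Therefore the returned value is $J = (\mathsf{Close}(C_R') \setminus \mathsf{Close}(C_R)) \cap \Delta = \mathsf{Close}(C_R') \cap \Delta$, and because $R[\ipos \setminus \Delta] \subseteq \mathsf{Close}(C_R')$ by monotonicity we obtain $\mathsf{Close}(C_R') = R[\ipos \setminus \Delta] \cup J$, as required.

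Next I would show that $J = \mat{\Pi^{\mathsf{stc}(R)}}{R[\ipos \setminus \Delta]} \cap \Delta$ is the \emph{smallest} dataset satisfying $\apply{\Pi^{\mathsf{stc}(R)}}{(\ipos \setminus \Delta) \cup J, \ineg} \cap \Delta \subseteq J$. For satisfaction, the $R$-facts of $(\ipos \setminus \Delta) \cup J$ are exactly $R[\ipos \setminus \Delta] \cup J = \mathsf{Close}(C_R')$, which is closed under $\Pi^{\mathsf{stc}(R)}$; hence $\apply{\Pi^{\mathsf{stc}(R)}}{(\ipos \setminus \Delta) \cup J, \ineg} \subseteq \mathsf{Close}(C_R')$, and intersecting with $\Delta$ gives a subset of $\mathsf{Close}(C_R') \cap \Delta = J$. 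For minimality, let $J'$ satisfy $\apply{\Pi^{\mathsf{stc}(R)}}{(\ipos \setminus \Delta) \cup J', \ineg} \cap \Delta \subseteq J'$; I would prove by induction on the minimal derivation depth of a fact in $\mat{\Pi^{\mathsf{stc}(R)}}{R[\ipos \setminus \Delta]}$ that every such fact lying in $\Delta$ belongs to $J'$. A depth-$0$ fact lies in $R[\ipos \setminus \Delta]$ and hence not in $\Delta$, so the base case is vacuous; for the inductive step, each body atom of the derivation lies in $R[\ipos] = R[\ipos \setminus \Delta] \cup \Delta$ and is therefore either in $\ipos \setminus \Delta$ or, by the induction hypothesis, in $J'$, so the derived fact lies in $\apply{\Pi^{\mathsf{stc}(R)}}{(\ipos \setminus \Delta) \cup J', \ineg}$, and if it is in $\Delta$ the closure property of $J'$ forces it into $J'$. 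This gives $J \subseteq J'$.

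I expect the first part to be the main obstacle, specifically the bookkeeping that $\textsc{CloseEdges}(Y_R)$ returns exactly the closure difference $\mathsf{Close}(C_R') \setminus \mathsf{Close}(C_R)$. The subtlety is that an edge $R(u,v) \in Y_R$ may already connect its endpoints by the time it is processed, because earlier edges of $Y_R$ have already joined the relevant components, in which case no fact is emitted for it directly; one must argue that any such fact was nonetheless emitted during the earlier merge that first joined those components, so that no fact of $\mathsf{Close}(C_R') \setminus \mathsf{Close}(C_R)$ is missed and none outside it is produced. Once this is in hand, the remaining set manipulations and the depth induction are routine, mirroring the proofs of Claims~\ref{claimadd} and~\ref{claimredtc}.
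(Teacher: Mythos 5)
Your proof is correct, and its first half (establishing ${\mathsf{Close}(C_R) = R[\ipos \setminus \Delta] \cup J}$ after the update via the analysis of $\textsc{CloseEdges}$) coincides with the paper's, which simply defers to Claim~\ref{claimadd} for this step; the subtlety you flag about $\textsc{CloseEdges}(Y_R)$ returning exactly the closure difference, and your resolution of it (a fact not emitted for an edge whose endpoints are already connected was emitted by the earlier merge that joined them), is precisely the bookkeeping the paper buries in that reference. Where you genuinely diverge is the minimality argument. The paper first notes that, by the $\textsc{CloseEdges}$/semina\"ive correspondence, $J$ is the smallest set satisfying the \emph{addition-style} condition ${\apply{\Pi^{\mathsf{stc}(R)}}{(\ipos \setminus \Delta) \cup J, \ineg \appargs Y_R \cup J} \subseteq (\ipos \setminus \Delta) \cup J}$, and then argues by contradiction: any $J'$ satisfying the rederivation condition also satisfies this addition-style condition, because every fact it derives lies in ${\ipos = I}$ (as $R[I]$ is closed) and is therefore either in ${\ipos \setminus \Delta}$ or, being in $\Delta$, forced into $J'$ by the rederivation condition; a strictly smaller $J'$ would then contradict the already-established minimality. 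You instead characterise the output explicitly as ${J = \mat{\Pi^{\mathsf{stc}(R)}}{R[\ipos \setminus \Delta]} \cap \Delta}$ and prove minimality from scratch by induction on derivation depth. Both routes hinge on the same two facts (absence of negation in the module and closedness of $R[\ipos]$ because $\ipos$ is the original materialisation), but they buy different things: the paper's reduction is shorter because it recycles the minimality of the semina\"ive fixpoint already proved for $\addfn{}$, whereas your argument is self-contained and, unlike the paper's contradiction argument, explicitly verifies that $J$ itself \emph{satisfies} the rederivation condition ${\apply{\Pi^{\mathsf{stc}(R)}}{(\ipos \setminus \Delta) \cup J, \ineg} \cap \Delta \subseteq J}$ rather than leaving that direction implicit.
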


\begin{proof}
$\mathsf{Close}(C_R) \cup Y_R = R[\ipos \setminus \Delta]$ ensures
$\mathsf{Close}(C_R) = R[(\ipos \setminus \Delta) \setminus Y_R]$. But then,
$\mathsf{Close}(C_R) = R[\ipos \setminus \Delta] \cup J$ holds after the update
in the same way as in the proof for Claim~\ref{claimadd}. Moreover, $J$ is the
smallest dataset that satisfies
\begin{displaymath}
    \apply{\Pi^{\mathsf{stc}(R)}}{(\ipos \setminus \Delta) \cup J, \ineg \appargs Y_R \cup J} \subseteq (\ipos \setminus \Delta) \cup J.
\end{displaymath}
Now to see that $J$ is also the smallest dataset satisfying
$\apply{\Pi^{\mathsf{stc}(R)}}{(\ipos \setminus \Delta) \cup J, \ineg} \cap
\Delta \subseteq J$, assume for the sake of a contradiction that a smaller
dataset $J'$ satisfies the above---that is,
$\apply{\Pi^{\mathsf{stc}(R)}}{(\ipos \setminus \Delta) \cup J', \ineg} \cap
\Delta \subseteq J'$ holds. Now consider arbitrary $F \in
\apply{\Pi^{\mathsf{stc}(R)}}{(\ipos \setminus \Delta) \cup J', \ineg \appargs
Y_R \cup J'} \subseteq \apply{\Pi^{\mathsf{stc}(R)}}{(\ipos \setminus \Delta)
\cup J', \ineg}$. Since the original materialisation $I$ is passed as $\ipos$
when the function gets called, it is clear that $F \in \ipos = I$ holds. Now if
$F \not \in \Delta$, then by our assumption we have $F \in J'$. But then, $J'$
also satisfies ${\apply{\Pi^{\mathsf{stc}(R)}}{(\ipos \setminus \Delta) \cup
J', \ineg \appargs Y_R \cup J'} \subseteq (\ipos \setminus \Delta) \cup J'}$
and $J'$ is smaller than $J$, which is a contradiction.
\end{proof}

That $\redfn{\mathsf{stc}(R)}$ captures $\Pi^{\mathsf{stc}(R)}$ during the
execution of Algorithm~\ref{alg:dredmod} follows from Claim~\ref{claimredstc}
and Definition~\ref{def:red}. Finally, the proof of $\addfn{\mathsf{stc}(R)}$
capturing $\Pi^{\mathsf{stc}(R)}$ during the execution of
Algorithm~\ref{alg:dredmod} is analogous to the proof of Claim~\ref{claimadd}
so we omit the details for the sake of brevity.

}{}

\end{document}